\documentclass{article}

\usepackage[preprint]{neurips_2026}

\setcitestyle{round}

\usepackage[utf8]{inputenc} 
\usepackage[T1]{fontenc}    
\usepackage{url}            
\usepackage{booktabs}       
\usepackage{amsfonts}       
\usepackage{nicefrac}       
\usepackage{microtype}      
\usepackage{xcolor}         

\usepackage{amsmath,amssymb,bbm}
\usepackage{mathrsfs}
\usepackage{mathtools}
\usepackage{enumitem}
\usepackage{wrapfig}
\usepackage{mdframed}

\usepackage{pgfplots}
\pgfplotsset{compat=1.18}

\usepackage{xcolor}
\definecolor{mydarkblue}{rgb}{0,0.08,0.45}

\usepackage[
  colorlinks=true,
  linkcolor=mydarkblue,
  citecolor=mydarkblue,
  urlcolor=mydarkblue,
  filecolor=mydarkblue
]{hyperref}

\newcommand{\blackref}[1]{\hyperref[#1]{\textcolor{black}{\ref*{#1}}}}

\newcommand{\mc}[1]{\mathcal{#1}}

\newcommand{\mb}[1]{\mathbb{#1}}

\newcommand{\tr}{\operatorname{tr}}

\DeclareMathOperator*{\argmin}{argmin}

\newcommand{\BlackBox}{\ensuremath{\square}}

\newenvironment{proof}[1][]
{
\par\noindent
\textbf{Proof
\if\relax\detokenize{#1}\relax
\else\ #1
\fi. }\ignorespaces
}
{
\hfill\BlackBox\par\vspace{2mm}
}

\newtheorem{theorem}{Theorem}[section]
\newtheorem{lemma}[theorem]{Lemma}
\newtheorem{proposition}[theorem]{Proposition}
\newtheorem{remark}[theorem]{Remark}
\newtheorem{corollary}[theorem]{Corollary}
\newtheorem{definition}[theorem]{Definition}

\title{Symmetry Guarantees Statistic Recovery \\  in Variational Inference}

\author{
  Daniel Marks \\
  Department of Computer Science\\
  University of Oxford\\
  \texttt{daniel.marks@cs.ox.ac.uk} \\
  \And
  Dario Paccagnan  \\
  Department of Computing \\
  Imperial College London \\
  \texttt{d.paccagnan@imperial.ac.uk} \\
  \AND
  Mark van der Wilk
  \\
  Department of Computer Science \\
  University of Oxford \\
  \texttt{mark.vdwilk@cs.ox.ac.uk} 
}

\begin{document}

\maketitle

\begin{abstract}
    Variational inference (VI) is a central tool in modern machine learning, used to approximate an intractable target density by optimising over a tractable family of distributions. As the variational family cannot typically represent the target exactly, guarantees on the quality of the resulting approximation are crucial for understanding which of its properties VI can faithfully capture. Recent work has identified instances in which symmetries of the target and the variational family enable the recovery of certain statistics, even under model misspecification. However, these guarantees are inherently problem-specific and offer little insight into the fundamental mechanism by which symmetry forces statistic recovery. In this paper, we overcome this limitation by developing a general theory of symmetry-induced statistic recovery in variational inference. First, we characterise when variational minimisers inherit the symmetries of the target and establish conditions under which these pin down identifiable statistics. Second, we unify existing results by showing that previously known statistic recovery guarantees in location--scale families arise as special cases of our theory. Third, we apply our framework to distributions on the sphere to obtain novel guarantees for directional statistics in von Mises--Fisher families. Together, these results provide a modular blueprint for deriving new recovery guarantees for VI in a broad range of symmetry settings.
\end{abstract}

\section{Introduction}
One of the central problems of modern statistics and machine learning is the accurate estimation of intractable probability distributions. It most prominently arises in the context of Bayesian statistics, where the intractability of the marginal likelihood renders exact computation of the posterior distribution infeasible in all but the simplest statistical models. This has led to the development of a range of approximate inference methods to approximate or sample from such intractable distributions.

Variational inference (VI) has emerged as a popular approximate inference technique \citep{jordan_introduction_1999, wainwright_graphical_2008, blei_variational_2017}, balancing the scalability of simpler methods such as Laplace approximations \citep{mackay_bayesian_1992} with the fidelity of more computationally intensive approaches such as MCMC \citep{robert_monte_2004}. At its core, VI recasts approximate inference as an optimisation problem: given an intractable target distribution, it seeks the best approximation to it within a tractable variational family by minimising a measure of dissimilarity between the two. The most commonly employed objective is the reverse Kullback--Leibler (KL) divergence, for which a readily computable optimisation surrogate is provided by the evidence lower bound. However, the principles of VI are agnostic to the particular choice of divergence, and alternative objectives have been successfully explored, including the forward KL divergence \citep{naesseth_markovian_2020}, \(\alpha\)-divergences \citep{hernandez-lobato_black-box_2016, dieng_variational_2017, daudel_alpha-divergence_2023}, \(\alpha\)--\(\beta\) divergences \citep{regli_alpha-beta_2018}, and R\'enyi divergences \citep{li_renyi_2016}.

A crucial drawback of variational methods lies in the fact that the variational family may not contain the target distribution, in which case the resulting approximation can be arbitrarily bad. This occurs, for example, when the variational family cannot match the target’s number of modes, or is restricted to factorised distributions, as in mean-field variational inference \citep{peterson_mean_1987, hinton_keeping_1993}, despite the target exhibiting strong dependencies. As a result, guarantees on the quality of VI's estimates are crucial for assessing the reliability of the resulting approximations and understanding when and why they may be relied upon in practice.

In this work, we derive formal guarantees for the statistic recovery of variational approximations, even under severe model misspecification, in the presence of symmetries of the target distribution. Our goal is to develop a general framework for symmetry propagation in variational inference, together with a precise correspondence between symmetries and recoverable statistics, that allows us to answer the following open problem first articulated by \citet{margossian_variational_2025}:

\begin{center}
\emph{For a given symmetry of the target distribution,\\
which statistics does variational inference recover exactly?}
\end{center}

\paragraph{Main contributions}

\begin{enumerate}
    \item We prove a general statistic recovery theorem showing that if the target is invariant under a group of symmetries that the variational family respects, a unique variational minimiser inherits this invariance and thus recovers all symmetry-determined statistics.

    \item We show that the recovery results of \citet{margossian_variational_2025, margossian_generalized_2025} in location--scale families under even and elliptical symmetry arise as special cases of our general framework.

    \item We apply our theory to directional statistics on the sphere and derive novel guarantees for recovering the axis of symmetry in von Mises--Fisher families under rotational symmetry.
\end{enumerate}

Our results are formulated at the level of \(f\)-divergences, thereby encompassing many of the aforementioned variational objectives, and are agnostic to the particular variational inference scheme used to optimise them \citep{hoffman_stochastic_2013, ranganath_black_2014, kucukelbir_automatic_2017}. Conceptually, our theory identifies the fundamental mechanism by which symmetry propagates through variational inference and provides a modular framework for deriving new statistic recovery guarantees in symmetry settings beyond those considered here. 

\section{Related work}

A large body of work has concentrated on asymptotic guarantees for variational approximations. \citet{wang_frequentist_2019} proved a variational Bernstein--von Mises theorem, establishing frequentist consistency of variational estimators. \citet{pati_statistical_2018} and \citet{yang_alpha_2020} proved convergence rates for point estimates constructed from variational approximations with reverse KL and \(\alpha\)-divergence objectives, while \citet{zhang_convergence_2020} proved convergence rates for entire posteriors in nonparametric settings. \citet{alquier_concentration_2020} similarly derived such rates for tempered posteriors.

A parallel line of work studies non-asymptotic guarantees for variational methods. \citet{han_statistical_2019} derived explicit bounds on the KL divergence between a mean-field variational posterior and a normal distribution centred at the maximum likelihood estimator, and showed that the variational mean matches the MLE up to higher-order terms. Motivated by empirical evidence of VI's ability to recover the mean \citep{mackay_information_2003, giordano_covariances_2018}, subsequent work focused on the quality of variational estimates of the target's statistics. In particular, \citet{katsevich_approximation_2024} derived non-asymptotic bounds on the mean and covariance error in Gaussian VI in terms of total variation.

Most closely related to our work is that of \citet{margossian_variational_2025, margossian_generalized_2025}, who showed that, provided the variational minimiser is unique, VI in location--scale families exactly recovers the mean when the target and variational family exhibit even symmetry, and additionally recovers the covariance up to a multiplicative constant when they exhibit elliptical symmetry. Our work generalises these results by moving beyond recovery guarantees tied to specific symmetry classes, distributional families, and statistics, and instead developing a unified symmetry-based framework for statistic recovery.

\section{A symmetry-based framework for statistic recovery}\label{sec:sym_rec}

Let \((\mc X, \mc B)\) be a measurable space, and let \(\mc P\) denote the set of associated probability measures. VI starts with a target distribution \(\mb P \in \mc P\), and seeks the best approximation to it inside a tractable variational family \(\mc Q \subseteq \mc P\) by minimising a statistical divergence \(\mc D \colon \mc P \times \mc P \to [0,\infty] \):
\begin{equation}\tag{\ensuremath{\star}}\label{eq:vi_problem}
\mc Q^\star \coloneqq \argmin_{\mb Q \in \mc Q} \mc D(\mb P \| \mb Q).
\end{equation}
Throughout this paper, we consider \(f\)-divergences \( \mc{D}_f \), a broad class of discrepancy measures that encompasses many of the objectives used in variational inference, including the usual reverse KL divergence.\footnote{The definition and further background are deferred to Appendix~\ref{apdx:f_div}.} In most practical applications, \(\mb P \notin \mc Q\), meaning that the variational family is not rich enough to contain the true target and the variational problem is \emph{misspecified}. For example, \(\mb P\) may be a complicated multimodal distribution and \(\mc Q\) a simple family of Gaussian distributions. Our goal in this section is therefore not necessarily to recover the full target distribution. Instead, we wish to characterise when, despite misspecification, VI can faithfully recover different \emph{statistics} of the target.

We model a statistic as a partial map \(S \colon \mc P \rightharpoonup \mc Y\), where \(\mc Y\) is an output space. This reflects the fact that many statistics are only well-defined on a natural subclass of \(\mc P\), which we call the \emph{domain} of \(S\) and denote by \(\operatorname{dom}(S) \subseteq \mc P\). For instance, for \(\mc X = \mb R^d\), the mean is a statistic mapping distributions with finite first moment to vectors in \(\mb R^d\), while the covariance is a statistic mapping distributions with finite second moment to positive-semidefinite \(d \times d\) matrices. Since recovery statements are only meaningful when the statistic is well-defined, we assume that \(\mb{P} \in \operatorname{dom}(S)\) and \(\mc{Q} \subseteq \operatorname{dom}(S)\). 

We are particularly interested in statistic recovery guarantees that arise from symmetries, such as coordinate permutations, rotations, or sign flips. Such symmetries emerge from transformations of the sample space itself, and in our setting it is therefore natural to represent them by a group \(\mc G\) of measurable bijections \(g \colon \mc X \to \mc X\) with measurable inverses --- a measure-theoretic generalisation of a change of coordinates --- so that they may be composed and inverted.\footnote{Formally, \(\mc G\) is a subgroup of the group of measurable automorphisms \(\mathrm{Aut}(\mc X, \mc B)\).} Each \(g \in \mc G\) acts on any probability measure \(\pi \in \mc P\) via the pushforward map
\[
g_\# \pi(A) \coloneqq \pi(g^{-1}(A)), \qquad A \in \mc B.
\]
Equivalently, if \(X \sim \pi\), then \(g(X) \sim g_\# \pi\). We say that \(\pi \in \mc P\) is \emph{\(\mc G\)-invariant} if \(g_\# \pi = \pi\) for all \(g \in \mc G\), that is, if applying any symmetry in the group leaves the distribution unchanged. For example, a centred isotropic Gaussian in \(\mb{R}^d\) is invariant under the action of the orthogonal group \(\mathrm{O}(d)\) of real \( d \times d\) orthogonal matrices, since it is left unchanged by rotations and reflections about the origin. Similarly, we say that a collection \(\mc F \subseteq \mc P\) is \emph{\(\mc G\)-stable} if for all \(g \in \mc G\), we have \(g_\# \mc F \subseteq \mc F\), meaning that the pushforward of any distribution in \(\mc F\) also lies in \(\mc F\). For example, the Gaussian family in \(\mb R^d\) is stable under the action of the general linear group \(\mathrm{GL}(d, \mb R)\) of real \(d \times d\) invertible matrices, since every linear transformation of a Gaussian random variable is also a Gaussian random variable. 

It turns out that symmetries of the target distribution can constrain the values that its statistics may take, and these constraints can be inherited by a variational minimiser, thus guaranteeing statistic recovery even when \(\mb P \notin \mc Q\). To make this possible, we impose three structural conditions describing how the target, the variational family, and the statistic must interact with \(\mc G\):

\begin{mdframed}[
  linewidth=0.8pt,
  linecolor=black,
  backgroundcolor=white,
  skipabove=8pt,
  skipbelow=8pt,
  innerleftmargin=8pt,
  innerrightmargin=8pt,
  innertopmargin=6pt,
  innerbottommargin=6pt,
  frametitle={\bfseries Conditions}
]
\begin{enumerate}
    \item[(1)] The target \(\mb P \in \operatorname{dom}(S)\) is \(\mc G\)-invariant.
    
    \item[(2)] The variational family \(\mc Q \subseteq \operatorname{dom}(S)\) is \(\mc G\)-stable.
    
    \item[(3)] The statistic \(S \colon \mc P \rightharpoonup \mc Y\) has \(\mc G\)-stable domain, and for all \(g \in \mc G\) and \(\pi_1, \pi_2 \in \operatorname{dom}(S)\), 
    \[
    S(\pi_1) = S(\pi_2)
    \quad\Longrightarrow\quad
    S(g_\# \pi_1) = S(g_\# \pi_2).
    \]
\end{enumerate}
\end{mdframed}

The final condition requires that if two distributions have the same statistic, then their pushforwards by any transformation in \( \mc{G} \) also have the same statistic. This allows us to study the effect of symmetries of a distribution directly at the level of its statistic values. To this end, fix \(y \in \operatorname{Im}(S)\). By definition, there exists at least one distribution \(\pi \in \operatorname{dom}(S)\) such that \(S(\pi) = y\). Now, for any \(g \in \mc{G}\), consider the quantity
\(S(g_{\#}\pi)\), which is well-defined since \(\operatorname{dom}(S)\) is assumed to be \(\mc{G}\)-stable. At first glance, this may seem to depend on the particular choice of \(\pi\) that generated \(y\), since multiple distributions can share the same statistic. However, Condition (3) ensures precisely that if \(\pi_1\) and \(\pi_2\) both satisfy
\(S(\pi_1) = S(\pi_2) = y\) then \(S(g_{\#}\pi_1) = S(g_{\#}\pi_2)\). We may thus introduce, for every \(g \in \mc{G}\), a well-defined map \(\rho_g: \operatorname{Im}(S) \rightarrow \operatorname{Im}(S)\) given by 
\[
\rho_g(y) \coloneqq S(g_{\#}\pi), \quad \text{for any } \pi \text{ such that } S(\pi) = y.
\]
Since our ultimate goal is statistic recovery, we are naturally led to consider the statistic values pinned down by the symmetries in \( \mc{G} \). We collect these in the fixed set
\[
\Gamma_S(\mc{G}) \coloneqq \left\{ y \in \operatorname{Im}(S) \mid \rho_g(y) = y, \ \forall g \in \mc{G} \right\}.
\]
We are now ready to state our main statistic recovery result. At a high level, Conditions~(1) and~(2) ensure that pushing forward a variational minimiser by any element of \(\mc G\) yields another minimiser. If the minimiser is unique, it must therefore inherit the \(\mc G\)-invariance of the target. Condition~(3) then lifts this symmetry from distributions to statistic values, yielding the desired recovery guarantee. 

\begin{theorem}[Statistic recovery]\label{thm:master_theorem}
Let \(S \colon \mc P \rightharpoonup \mc Y\) be a statistic, \(\mb P \in \operatorname{dom}(S)\) a target distribution, and \(\mc Q \subseteq \operatorname{dom}(S)\) a variational family. Assume the variational problem \eqref{eq:vi_problem} has a unique minimiser, denoted by \(\mb Q^\star\). If Conditions~(1)--(3) are satisfied for some group of symmetries \(\mc G\), then
\[
S(\mb P), S(\mb Q^\star) \in \Gamma_S(\mc G).
\]
\end{theorem}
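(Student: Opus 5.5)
The proof has two parts. First I show that \(\mb Q^\star\) is \(\mc G\)-invariant. Then I transfer invariance of \(\mb P\) and \(\mb Q^\star\) to their statistic values through the maps \(\rho_g\).

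The key property I need is invariance of \(f\)-divergences under measurable bijections. For any \(g\in\mc G\) and \(\mu,\nu\in\mc P\),
\[
\mc D_f(g_\#\mu\|g_\#\nu)=\mc D_f(\mu\|\nu).
\]
I would justify this from the definition in Appendix~\ref{apdx:f_div}. If \(\mu\ll\nu\) with density \(r=d\mu/d\nu\), then \(g_\#\mu\ll g_\#\nu\) with density \(r\circ g^{-1}\), by a change of variables for pushforwards. Integrating \(f(r\circ g^{-1})\) against \(g_\#\nu\) then returns \(\int f(r)\,d\nu\). If \(\mu\not\ll\nu\), the singular parts are carried bijectively by \(g\), so the general (Lebesgue-decomposition) form of the definition is preserved as well. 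Equivalently, one can use the data-processing inequality for \(f\)-divergences applied to \(g\) and to \(g^{-1}\), which gives both inequalities. This step depends on the exact definition in the appendix, so it is where I would be most careful; the data-processing route avoids the absolute-continuity casework.

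Invariance of the minimiser then follows. Fix \(g\in\mc G\). By Condition (2), \(g_\#\mb Q^\star\in\mc Q\). By Condition (1) and the invariance above,
\[
\mc D_f(\mb P\|g_\#\mb Q^\star)=\mc D_f(g_\#\mb P\|g_\#\mb Q^\star)=\mc D_f(\mb P\|\mb Q^\star)=\min_{\mb Q\in\mc Q}\mc D_f(\mb P\|\mb Q).
\]
So \(g_\#\mb Q^\star\) is also a minimiser, and uniqueness forces \(g_\#\mb Q^\star=\mb Q^\star\).

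Finally, let \(\pi\in\{\mb P,\mb Q^\star\}\). Both lie in \(\operatorname{dom}(S)\), by hypothesis and because \(\mc Q\subseteq\operatorname{dom}(S)\). Set \(y=S(\pi)\in\operatorname{Im}(S)\). By the well-definedness of \(\rho_g\), which is guaranteed by Condition (3), we may use \(\pi\) itself as the representative, giving \(\rho_g(y)=S(g_\#\pi)=S(\pi)=y\) for every \(g\in\mc G\). Hence \(y\in\Gamma_S(\mc G)\), which is the claim.
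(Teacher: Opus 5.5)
Your proposal is correct and follows the paper's proof. The paper proves the same invariance of \(\mc D_f\) under simultaneous pushforward, though by a direct change of variables against the common dominating measure \(g_\#\mu\) (after replacing \(f\) by a nonnegative equivalent generator) rather than by data processing; it then uses Conditions~(1)--(2) and uniqueness to get \(g_\#\mb Q^\star=\mb Q^\star\), and reads off \(\rho_g(S(\pi))=S(g_\#\pi)=S(\pi)\) for \(\pi\in\{\mb P,\mb Q^\star\}\) exactly as you do. If you take the data-processing route, note that you need it on an arbitrary measurable space, since the paper does not assume a standard Borel space; it does hold there for measurable maps.
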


A proof can be found in Appendix~\ref{apdx:proof_thm}. Theorem~\ref{thm:master_theorem} shows that even when \(\mb P \notin \mc Q\), variational inference in a \( \mc{G} \)-stable family against a \(\mc{G}\)-invariant target forces every symmetry-compatible statistic of the target and a unique variational minimiser to lie in the corresponding fixed set. When the fixed set is a singleton, exact recovery of the statistic is achieved. In general, however, the fixed set may instead be a low-dimensional subset of the codomain, in which case the recovery is only partial. We will see examples of both cases in Section~\ref{sec:prev_res}.

\begin{remark}
In practice, variational inference is carried out over a parameterised variational family \(\mc Q = \{\phi(\theta) \mid \theta \in \Theta\}\), where \(\Theta\) is a set of parameters and \(\phi \colon \Theta \to \mc P\) is a parameterisation map, by minimising \(\mc D_f(\mb P \| \phi(\theta))\) over \(\Theta\). While uniqueness of the minimising distribution need not imply uniqueness of the minimising parameter due to non-identifiability of the parameterisation, the converse does hold. As parameter-level uniqueness is often easier to verify in concrete models, this is the form in which we will use Theorem~\ref{thm:master_theorem} in later sections.
\end{remark}

\section{Location--scale families: guarantees under even and elliptical symmetry}\label{sec:prev_res}

In this section, we show that our framework generalises existing statistic recovery guarantees for VI in location--scale families. In particular, we demonstrate that the recovery results of \citet{margossian_variational_2025, margossian_generalized_2025} under even and elliptical symmetry arise as direct corollaries of Theorem~\ref{thm:master_theorem}, obtained by verifying conditions~(1)--(3) and computing the corresponding fixed set in each setting.\footnote{See Appendix~\ref{apdx:differences} for a comparison between our presentation and that of \citet{margossian_variational_2025, margossian_generalized_2025}.}

We begin by fixing the common setup and notation for this section. Throughout, we work on \(\mc X = \mb R^d\) with \(d \ge 1\), equipped with its Borel \(\sigma\)-algebra \(\mc B\), and denote by \(\mc P\) the associated set of probability measures. For \(n \in \mb N\), we let \(\mc P_n \subseteq \mc P\) denote the set of probability measures on \(\mb R^d\) with finite \(n\)-th moment. We also write \(\mc S_+^d\) and \(\mc S_{++}^d\) for the sets of real \(d \times d\) positive-semidefinite and positive-definite matrices, respectively. In all examples, we consider variational families of location--scale form and reason about the recovery of the mean statistic \(\mu \colon \mc P \rightharpoonup \mb R^d\) with domain \(\operatorname{dom}(\mu) = \mc P_1\), and the covariance statistic \(\Sigma \colon \mc P \rightharpoonup \mc S_+^d\) with domain \(\operatorname{dom}(\Sigma) = \mc P_2\), in different symmetry settings.

\begin{definition}[Location--scale family]
Fix a base distribution \(\mb Q_0 \in \mc P\). The location--scale family generated by \(\mb Q_0\) is the collection of distributions \(
\mc Q \coloneqq \{\mb Q_{\nu,S} \mid \nu \in \mb R^d,\ S \in \mc S_{++}^d\} \) where
\[
\mb Q_{\nu,S} \coloneqq (T_{\nu,S})_\# \mb Q_0, \qquad \text{with} \qquad T_{\nu,S}(x) \coloneqq \nu + S^{1/2} x.
\]
\end{definition}

Equivalently, if \(X_0 \sim \mb Q_0\), then \(\nu + S^{1/2} X_0 \sim \mb Q_{\nu,S}\). We note that since each \(\mb Q_{\nu,S}\) is the pushforward of \(\mb Q_0\) under an invertible affine map, \(\mc Q \subseteq \mc P_n\) if and only if \(\mb Q_0 \in \mc P_n\), meaning that the mean and covariance are well-defined for the whole variational family whenever they are well-defined for the base distribution.

\subsection{Mean recovery under even symmetry}\label{sec:mean_even}

We first consider the setting in which the target and variational family exhibit even symmetry, and show that a unique variational minimiser exactly recovers the mean statistic. 

\begin{definition}[Even symmetry]
A distribution \(\pi \in \mc P\) is \emph{even symmetric} about \(a \in \mb R^d\) if
\[
\pi(A) = \pi(2a - A)
\qquad \text{for all } A \in \mc B,
\]
where \(2a - A \coloneqq \{2a - x \mid x \in A\}\).
\end{definition}

To obtain the desired recovery guarantee, we specialise our framework to the even symmetric setting. Fix \(m \in \mb R^d\) and let \(\mb P \in \mc P_1\) be even symmetric about \(m\). Further, let \(\mc Q \subseteq \mc P_1\) be the location--scale family generated by a base distribution \(\mb Q_0 \in \mc P_1\) that is even symmetric about \(0\).\footnote{This is equivalent to requiring that each \(\mb Q_{\nu,S} \in \mc Q\) is even symmetric about its location parameter \(\nu\).} We apply Theorem~\ref{thm:master_theorem} to the mean statistic \(\mu\) under the symmetry group \(\mc G \coloneqq \{e, r_m\}\), where \(e\) denotes the identity map and \(r_a(x) \coloneqq 2a - x\) for all \(a \in \mb R^d\). We begin by verifying Conditions~(1)--(3):
\begin{enumerate}
    \item[(1)] Trivially, we know that \(e_\# \mb P = \mb P\). Moreover, since \(r_m = r_m^{-1}\), we have for every \(A \in \mc B\),
\[
(r_m)_\# \mb P(A)
=
\mb P(r_m^{-1}(A))
=
\mb P(r_m(A))
=
\mb P(A),
\]
where the last equality follows from the even symmetry of \(\mb P\) about \(m\). Thus \(\mb P\) is \(\mc G\)-invariant.

\item[(2)] For any \((\nu, S) \in \mb R^d \times \mc S_{++}^d\), we have \(e_\# \mb Q_{\nu,S} = \mb Q_{\nu,S} \in \mc Q\). Moreover, it is easy to verify that \(r_m \circ T_{\nu,S} = T_{2m - \nu,S} \circ r_0\), and therefore
\begin{equation}\label{eq:scale_unchanged}
\begin{aligned}
(r_m)_\# \mb Q_{\nu,S}
&=
(r_m \circ T_{\nu,S})_\# \mb Q_0
=
(T_{2m-\nu,S} \circ r_0)_\# \mb Q_0 \\
&=
(T_{2m-\nu,S})_\# \bigl((r_0)_\# \mb Q_0\bigr)
=
\mb Q_{2m-\nu,S} \in \mc Q,
\end{aligned}
\end{equation}
where the last equality follows from the even symmetry of \(\mb Q_0\) about \(0\). Hence, \(\mc Q\) is \(\mc G\)-stable.

\item[(3)] First, the domain \(\operatorname{dom}(\mu) = \mc P_1\) is \(\mc G\)-stable. Indeed, if \(\pi \in \mc P_1\), then \(e_\# \pi = \pi \in \mc P_1\) and, moreover, \((r_m)_\# \pi \in \mc P_1\), since
\[
\int_{\mb R^d} \|x\| \, d((r_m)_\# \pi)(x)
=
\int_{\mb R^d} \|2m - x\| \, d\pi(x)
\le
2\|m\| + \int_{\mb R^d} \|x\| \, d\pi(x)
<
\infty.
\]
Further, for every \(\pi \in \mc P_1\), we can compute \(\mu(g_\# \pi)\) for every \(g \in \mc G\) as
\begin{equation}\label{eq:mean_reflection_action}
\mu(g_\# \pi)
=
\begin{cases}
\mu(\pi), & g = e,\\
2m - \mu(\pi), & g = r_m.
\end{cases}
\end{equation}
As the right-hand side of \eqref{eq:mean_reflection_action} depends only on \(\mu(\pi)\), it follows that if \(\mu(\pi_1) = \mu(\pi_2)\), then \(\mu(g_\# \pi_1) = \mu(g_\# \pi_2)\) for all \(\pi_1, \pi_2 \in \mc P_1\) and all \(g \in \mc G\).
\end{enumerate}

We have thus verified the conditions of Theorem~\ref{thm:master_theorem}. Moreover, by \eqref{eq:mean_reflection_action}, Condition~(3) allows us to define the maps \(\rho_e(x)=x\) and \(\rho_{r_m}(x) = 2m-x\) on \(\operatorname{Im}(\mu) = \mb{R}^d\) and compute the corresponding fixed set
\[
\Gamma_\mu(\mc G)
=
\{x \in \mb R^d \mid \rho_g(x) = x,\ \forall g \in \mc G\}
=
\{x \in \mb R^d \mid 2m - x = x\}
=
\{m\}.
\]
Hence, Theorem~\ref{thm:master_theorem} guarantees exact recovery of the mean when specialised to the even-symmetric setting, provided the variational minimiser is unique. We record this result in the following corollary, which coincides with the mean recovery theorem of \citet{margossian_generalized_2025}.

\begin{corollary}[cf. Theorem~10 in \hypersetup{citecolor=black}\citet{margossian_generalized_2025}]\label{cor:mean_rec}
Fix \(m \in \mb R^d\) and let \(\mb P \in \mc P_1\) be even symmetric about \(m\). Let \(\mb Q_0 \in \mc P_1\) be even symmetric about \(0\), and let \(\mc Q\) be the location--scale family it generates. If \(\mc D_f(\mb P \| \mb Q_{\nu,S})\) has a unique minimiser \((\nu^\star, S^\star)\) over \(\mb R^d \times \mc S_{++}^d\), then
\[
\mu(\mb Q_{\nu^\star,S^\star}) = \mu(\mb P) = m.
\]
\end{corollary}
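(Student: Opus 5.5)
The corollary follows from Theorem~\ref{thm:master_theorem} applied to the mean statistic \(\mu\) with \(\mc G = \{e, r_m\}\). Conditions~(1)--(3) have already been verified above, and the fixed set has been computed as \(\Gamma_\mu(\mc G) = \{m\}\). Two things remain. First, parameter-level uniqueness must be converted into the distribution-level uniqueness that the theorem requires. Second, \(\mu(\mb P) = m\) must be identified directly.

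\textbf{Uniqueness.} Suppose \(\mb Q_{\nu,S}\) also attains \(\inf_{\mb Q \in \mc Q} \mc D_f(\mb P \| \mb Q)\). Then \((\nu,S)\) minimises \(\mc D_f(\mb P \| \mb Q_{\nu,S})\) over \(\mb R^d \times \mc S_{++}^d\). By hypothesis the parameter minimiser is unique, so \((\nu,S) = (\nu^\star,S^\star)\). Hence \(\mb Q^\star \coloneqq \mb Q_{\nu^\star,S^\star}\) is the unique minimiser of \eqref{eq:vi_problem}. This is the direction noted in the Remark following Theorem~\ref{thm:master_theorem}. Note also that \(\mc Q \subseteq \mc P_1 = \operatorname{dom}(\mu)\), because \(\mb Q_0 \in \mc P_1\) and the maps \(T_{\nu,S}\) are affine.

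\textbf{Recovery.} Theorem~\ref{thm:master_theorem} gives \(\mu(\mb P), \mu(\mb Q^\star) \in \Gamma_\mu(\mc G) = \{m\}\), so both equal \(m\). As a sanity check, \(\mu(\mb P) = m\) also follows directly: applying \eqref{eq:mean_reflection_action} to \(\mb P = (r_m)_\# \mb P\) gives \(\mu(\mb P) = 2m - \mu(\mb P)\). No step is a genuine obstacle. The only subtlety is the uniqueness translation, which needs only the trivial observation that every element of \(\mc Q\) has the form \(\mb Q_{\nu,S}\), so a second minimising distribution would yield a second minimising parameter.
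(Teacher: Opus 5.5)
Your proposal is correct and follows the paper's route exactly: verify Conditions~(1)--(3) for \(\mc G = \{e, r_m\}\), compute \(\Gamma_\mu(\mc G) = \{m\}\), and invoke Theorem~\ref{thm:master_theorem}, with parameter-level uniqueness passed to distribution-level uniqueness as in the Remark following that theorem. Spelling out the uniqueness translation via the fact that every element of \(\mc Q\) is some \(\mb Q_{\nu,S}\) is a useful detail that the paper leaves implicit.
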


The left panel of Figure~\ref{fig:elliptical_sym} illustrates this result. It is worth noting that in \eqref{eq:scale_unchanged}, the scale matrix remains unchanged under pushforward by elements of \(\mc G\). Corollary~\ref{cor:mean_rec} therefore also applies mutatis mutandis to VI over a location family, i.e., a location--scale family with fixed scale matrix.

\subsection{Mean and covariance recovery under elliptical symmetry}

We next consider the setting where the target and variational family exhibit elliptical symmetry. Here, we show that a unique variational minimiser exactly recovers the mean and further recovers the covariance matrix up to a multiplicative constant. This, in turn, yields exact recovery of the correlation matrix.

\begin{definition}[Elliptical symmetry]
A distribution \(\pi \in \mc{P}\) is \emph{elliptically symmetric} about \(\gamma \in \mb{R}^d\) with scale matrix \(C \in \mc{S}_{++}^d\) if there exists an \(\mathrm{O}(d)\)-invariant distribution \(\pi_0 \in \mc{P}\) such that
\[
\pi = (T_{\gamma,C})_\# \pi_0,
\qquad
T_{\gamma,C}(x)\coloneqq \gamma+C^{1/2}x.
\]
\end{definition}
Here, \(\mathrm{O}(d)\) is the orthogonal group, which consists of real \(d \times d\) orthogonal matrices. Equivalently, if \(X_0 \sim \pi_0\), then \(\gamma+C^{1/2}X_0 \sim \pi\). We proceed, as before, by specialising Theorem~\ref{thm:master_theorem} to this new setting. Let \(\mb{P} \in \mc{P}\) be elliptically symmetric about \(m \in \mb{R}^d\) with scale matrix \(M \in \mc{S}_{++}^d\), and let \(\mc{Q} \subseteq \mc{P}\) be the location--scale family generated by an \(\mathrm{O}(d)\)-invariant base distribution \(\mb{Q}_0 \in \mc{P}\).\footnote{This is equivalent to requiring that each \(\mb{Q}_{\nu, S} \in \mc{Q}\) is elliptically symmetric about \(\nu\) with scale matrix \(S\).}

\paragraph{Mean recovery.} Observe that a distribution that is elliptically symmetric about \(\gamma \in \mb{R}^d\) is also even symmetric about \(\gamma\). Indeed, if \( \pi=(T_{\gamma,C})_\#\pi_0 \) for some \(\mathrm{O}(d)\)-invariant \(\pi_0\), then for every \(A \in \mc{B}\)
\[
\pi(2\gamma-A) = \pi_0(T_{\gamma,C}^{-1}(2\gamma-A)) = \pi_0(-T_{\gamma,C}^{-1}(A)) = \pi_0(T_{\gamma,C}^{-1}(A)) = \pi(A),
\]
where the penultimate step follows from the fact that \(\mathrm{O}(d)\)-invariance of \(\pi_0\) implies \((-I_d)_\# \pi_0 = \pi_0\). Therefore, if \(\mb{P}, \mb{Q}_0 \in \mc{P}_1\), mean recovery is immediate by Corollary~\ref{cor:mean_rec}.

\paragraph{Covariance recovery.}
To reason about covariance recovery, we assume that \(\mb{P}, \mb{Q}_0 \in \mc{P}_2\) and consider the symmetry group of ellipsoids centred at \(m\) with shape matrix \(M\), defined as
\[
\mc{G}\coloneqq \{g_R\mid R\in \mathrm{O}(d)\},
\qquad
g_R(x)\coloneqq m+A_R(x-m),
\qquad
A_R\coloneqq M^{1/2}RM^{-1/2}.
\]
We begin by verifying Conditions~(1)--(3) under the action of this group:

\begin{enumerate}
\item[\((1)\)]
By elliptical symmetry, there exists an \(\mathrm{O}(d)\)-invariant \(\mb{P}_0 \in \mc{P}_2\) such that \(\mb{P}=(T_{m,M})_\#\mb{P}_0\). It is easy to check that \(g_R\circ T_{m,M}=T_{m,M}\circ R\) for every \(R \in \mathrm{O}(d)\), and therefore
\[
(g_R)_\# \mb{P}
=
(g_R\circ T_{m,M})_\#\mb{P}_0
=
(T_{m,M}\circ R)_\#\mb{P}_0
=
(T_{m,M})_\#(R_\#\mb{P}_0)
=
\mb{P}.
\]

\item[\((2)\)] Let \((\nu,S)\in \mb{R}^d\times \mc{S}_{++}^d\) and \(R\in \mathrm{O}(d)\). Define
\[
\nu' \coloneqq m+A_R(\nu-m),
\qquad
S' \coloneqq A_RSA_R^\top,
\qquad
U \coloneqq (S')^{-1/2}A_RS^{1/2}.
\]
Then \(U\in \mathrm{O}(d)\) because \(UU^\top = I_d\), and since \(\mb{Q}_0\) is \(\mathrm{O}(d)\)-invariant, we have \(U_\# \mb{Q}_0=\mb{Q}_0\). Moreover, we have that \(g_R\circ T_{\nu,S}
=
T_{\nu',S'}\circ U\) and therefore
\[
(g_R)_\#\mb{Q}_{\nu,S}
=
(g_R\circ T_{\nu,S})_\#\mb{Q}_0
=
(T_{\nu',S'}\circ U)_\#\mb{Q}_0
=
(T_{\nu',S'})_\#(U_\#\mb{Q}_0)
=
\mb{Q}_{\nu',S'}
\in \mc{Q}.
\]

\item[\((3)\)] First, the domain \(\mc{P}_2\) is \(\mc{G}\)-stable. Indeed, for all \(x \in \mb{R}^d\)
\begin{equation}\label{eq:operator_norm}
\| g_R(x)\| \le \| m \| + \|A_R \|_{\mathrm{op}} (\|x \| + \| m \|),
\end{equation}
and squaring \eqref{eq:operator_norm}, together with the identity \((a+b)^2 \le 2a^2 + 2b^2\), yields that for any \(\pi \in \mc{P}_2\)
\[
\int_{\mb{R}^d} \| x \|^2 \, d((g_R)_\#\pi)(x)
\le
2\|A_R\|_{\mathrm{op}}^2 \int_{\mb{R}^d}\|x\|^2\,d\pi(x)
+
2(1+\|A_R\|_{\mathrm{op}})^2\|m\|^2
<\infty,
\]
since \(\| A_R \|_{\mathrm{op}} < \infty\) by the submultiplicativity of the operator norm. Moreover, for every \(\pi \in \mc{P}_2\) and every \(R \in \mathrm{O}(d)\), we have \( \mu((g_R)_\# \pi) = g_R(\mu(\pi)) \) and so
\begin{equation}\label{eq:cov_induced_action}
\begin{aligned}
\Sigma((g_R)_\#\pi)
&=
\int_{\mb{R}^d}
\bigl(g_R(x)-g_R(\mu(\pi))\bigr)
\bigl(g_R(x)-g_R(\mu(\pi))\bigr)^\top
\,d\pi(x) \\
&=
A_R
\left(
\int_{\mb{R}^d}
(x-\mu(\pi))(x-\mu(\pi))^\top
\,d\pi(x)
\right)
A_R^\top \\
&=
A_R\,\Sigma(\pi)\,A_R^\top,
\end{aligned}
\end{equation}
meaning that if \(\Sigma(\pi_1)=\Sigma(\pi_2)\), then
\(\Sigma((g_R)_\#\pi_1)=\Sigma((g_R)_\#\pi_2)\) for all \(R \in \mathrm{O}(d)\).
\end{enumerate}

By \eqref{eq:cov_induced_action}, Condition~(3) allows us to define for every \(g_R \in \mc{G}\) the map \(\rho_{g_R}(V) = A_R V A_R^\top\) on \(\operatorname{Im}(\Sigma) = \mc{S}_+^d\) and find the positive-semidefinite matrices satisfying \(\rho_{g_R}(V) = V\) for all \(g_R \in \mc{G}\): 
\[
\Gamma_\Sigma(\mc{G})
= \{V\in \mc{S}_+^d \mid A_RVA_R^\top=V,\ \forall R\in \mathrm{O}(d)\}
=
\{\lambda M \mid \lambda\ge 0\}.
\]
The last equality follows from the fact that a matrix \(V \in \mc{S}_+^d\) lies in the fixed set if and only if \(M^{-1/2}VM^{-1/2}\) is invariant under conjugation by every \(R \in \mathrm{O}(d)\), since \(A_R = M^{1/2}RM^{-1/2}\), which in turn implies that \(M^{-1/2}VM^{-1/2} = \lambda I_d\) for some \(\lambda \ge 0\). It follows that if \(\mb{P}\) is not a Dirac measure, then \( \Sigma(\mb{P}) = \alpha M\) for some \( \alpha > 0\). Similarly, if \(\mb{Q}_0\) is not a Dirac measure and \((\nu^\star, S^\star)\) is a unique minimiser of the variational objective over \(\mb{R}^d \times \mc{S}_{++}^d\), then \(\Sigma(\mb{Q}_{\nu^\star, S^\star}) = bM\) for some \(b > 0\). Therefore, outside the trivial point-mass case, a unique variational minimiser recovers the covariance up to a positive multiplicative constant. 

\paragraph{Correlation recovery.} We conclude this section by noting an immediate implication of covariance recovery for the correlation statistic. To that end, let \(\mc{P}_2^\circ \subseteq \mc{P}_2\) denote the set of distributions with finite second moment and positive marginal variances, and assume \(\mb{P}, \mb{Q}_0 \in \mc{P}_2^\circ\). Since \(\mb{Q}_0\) is \(\mathrm{O}(d)\)-invariant, its covariance is equal to a positive scalar multiple of the identity, which guarantees that \(\mc{Q} \subseteq \mc{P}_2^\circ\). From covariance recovery, we know that \(\Sigma(\mb{P}) = aM\) for some \(a > 0\), and \(\Sigma(\mb{Q}_{\nu^\star, S^\star}) = bM\) for some \(b > 0\). To remove this scale ambiguity, we may define for every \(\pi \in \mc{P}_2^\circ\) the correlation statistic
\[
\rho(\pi) \coloneqq D(\pi)^{-1/2}\Sigma(\pi)D(\pi)^{-1/2}, \qquad D(\pi) \coloneqq \operatorname{diag}(\Sigma(\pi)).
\]

Then \(D(\mb{P}) = a\,\operatorname{diag}(M)\) and \(D(\mb{Q}_{\nu^\star, S^\star}) = b\,\operatorname{diag}(M)\), and thus
\[
\rho(\mb{P})
=
\operatorname{diag}(M)^{-1/2}M\operatorname{diag}(M)^{-1/2}
=
\rho(\mb{Q}_{\nu^\star, S^\star}).
\]
Therefore, the correlation is recovered exactly by a unique minimiser. We summarise our discussion on statistic recovery under elliptical symmetry in the following corollary, which coincides with Theorem~11 of \citet{margossian_generalized_2025}. An example is provided in the right panel of Figure~\ref{fig:elliptical_sym}.

\begin{figure}[t]
    \centering
    \includegraphics[width=\linewidth]{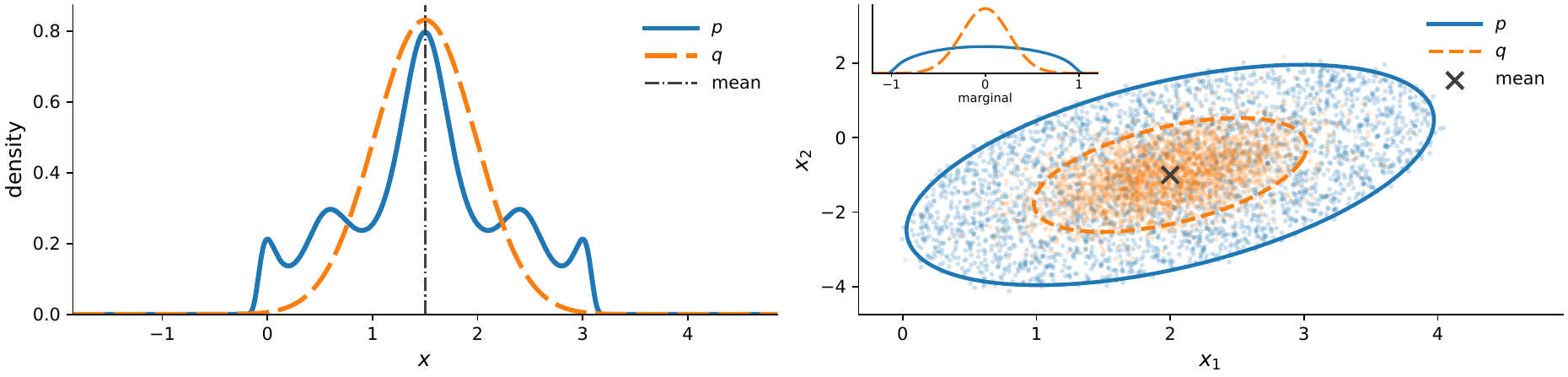}
    \caption{Unique variational minimisers \(q\) from a Gaussian family exactly recover symmetry-determined statistics of highly non-Gaussian targets \(p\). Left: under even symmetry, the mean is recovered exactly as per Corollary~\ref{cor:mean_rec}. Right: under elliptical symmetry, the mean and correlation are recovered exactly, whereas the covariance is recovered only up to scale, as per Corollary~\ref{cor:elliptical_sym}.}
    \label{fig:elliptical_sym}
\end{figure}

\begin{corollary}[cf. Theorem~11 in \hypersetup{citecolor=black}\citet{margossian_generalized_2025}]\label{cor:elliptical_sym}
Fix \(m\in \mb{R}^d\) and \(M\in \mc{S}_{++}^d\), and let \(\mb{P}\) be elliptically symmetric about \(m\) with scale matrix \(M\). Let \(\mb{Q}_0\) be an \(\mathrm{O}(d)\)-invariant base distribution, and let \(\mc{Q}\) be the location--scale family it generates. If \(\mc{D}_f(\mb{P} \| \mb{Q}_{\nu, S})\) has a unique minimiser \((\nu^\star,S^\star)\) over \(\mb{R}^d\times \mc{S}_{++}^d\), then
\begin{enumerate}
\item[(i)] if \(\mb{P},\mb{Q}_0\in \mc{P}_1\), then
\[
\mu(\mb{Q}_{\nu^\star,S^\star})=\mu(\mb{P})=m,
\]
\item[(ii)] if \(\mb{P},\mb{Q}_0\in \mc{P}_2\) and are not Dirac measures then
\[
\Sigma(\mb{Q}_{\nu^\star,S^\star}), \Sigma(\mb{P}) \in \{ \lambda M \mid \lambda > 0 \},
\]
\item[(iii)] if in addition \(\mb{P}, \mb{Q}_0 \in \mc{P}_2^\circ\), then
\[
\rho(\mb{Q}_{\nu^\star,S^\star}) = \rho(\mb{P}) = \operatorname{diag}(M)^{-1/2}M\operatorname{diag}(M)^{-1/2}.
\]
\end{enumerate}
\end{corollary}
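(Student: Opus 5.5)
The proof assembles the three parts from the preceding discussion, each obtained by specialising Theorem~\ref{thm:master_theorem}. Throughout, uniqueness of the parameter minimiser \((\nu^\star,S^\star)\) gives uniqueness of the minimising distribution \(\mb Q^\star=\mb Q_{\nu^\star,S^\star}\), as noted in the remark after Theorem~\ref{thm:master_theorem}.

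\emph{Part (i).} I will first show that elliptical symmetry about \(m\) implies even symmetry about \(m\). This uses \((-I_d)\in\mathrm O(d)\), which gives \((-I_d)_\#\pi_0=\pi_0\), and the identity \(T_{m,M}^{-1}(2m-A)=-T_{m,M}^{-1}(A)\). Applying the same argument to \(\mb Q_0\), which is \(\mathrm O(d)\)-invariant, shows that \(\mb Q_0\) is even symmetric about \(0\). Corollary~\ref{cor:mean_rec} then applies directly.

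\emph{Part (ii).} I will use the group \(\mc G=\{g_R\mid R\in\mathrm O(d)\}\) with \(A_R=M^{1/2}RM^{-1/2}\). Conditions~(1)--(3) have already been verified above through the intertwining identities \(g_R\circ T_{m,M}=T_{m,M}\circ R\) and \(g_R\circ T_{\nu,S}=T_{\nu',S'}\circ U\). The latter needs \(U\in\mathrm O(d)\); this follows from \(UU^\top=(S')^{-1/2}A_RSA_R^\top(S')^{-1/2}=I_d\). The same verification gives the covariance transformation rule \(\Sigma((g_R)_\#\pi)=A_R\Sigma(\pi)A_R^\top\). By Theorem~\ref{thm:master_theorem}, both \(\Sigma(\mb P)\) and \(\Sigma(\mb Q^\star)\) lie in \(\Gamma_\Sigma(\mc G)\).

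The main step is showing that \(\Gamma_\Sigma(\mc G)=\{\lambda M\mid\lambda\ge0\}\). Set \(W=M^{-1/2}VM^{-1/2}\). Then \(A_RVA_R^\top=V\) is equivalent to \(RWR^\top=W\) for all \(R\in\mathrm O(d)\). Take a unit eigenvector \(v\) of the symmetric matrix \(W\). For any other unit vector \(u\), choose \(R\) with \(Rv=u\); then \(Wu=RWR^\top Rv=RWv\), so \(u\) is an eigenvector with the same eigenvalue. Hence \(W=\lambda I_d\), and \(\lambda\ge0\) because \(V\) is positive semidefinite.

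To exclude \(\lambda=0\), note that \(\Sigma(\pi)=0\) forces \(\pi\) to be a Dirac measure at its mean. The target \(\mb P\) is not Dirac by assumption. For \(\mb Q^\star\), it is the pushforward of the non-Dirac \(\mb Q_0\) under the bijection \(T_{\nu^\star,S^\star}\), so it is not Dirac either.

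\emph{Part (iii).} Write \(\Sigma(\mb P)=aM\) and \(\Sigma(\mb Q^\star)=bM\) with \(a,b>0\). Then \(D(\mb P)=a\operatorname{diag}(M)\) and \(D(\mb Q^\star)=b\operatorname{diag}(M)\). The scalars cancel in the correlation formula, so both correlations equal \(\operatorname{diag}(M)^{-1/2}M\operatorname{diag}(M)^{-1/2}\).

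For \(\rho\) to be defined on \(\mb Q^\star\), I need \(\mc Q\subseteq\mc P_2^\circ\). Since \(\mb Q_0\) is \(\mathrm O(d)\)-invariant, \(\Sigma(\mb Q_0)=cI_d\) with \(c>0\). Then \(\Sigma(\mb Q_{\nu,S})=cS\), which has positive diagonal because \(S\in\mc S_{++}^d\).

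The only genuinely new ingredient is the fixed-set computation in part (ii); the other parts follow from earlier results and direct calculation.
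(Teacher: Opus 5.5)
Your proposal is correct and follows the paper's own argument: even symmetry plus Corollary~\ref{cor:mean_rec} for (i), Theorem~\ref{thm:master_theorem} with the ellipsoid group \(\mc G=\{g_R\mid R\in\mathrm{O}(d)\}\) and the fixed set \(\{\lambda M\mid \lambda\ge 0\}\) for (ii), and cancellation of the scale factors for (iii). Your eigenvector/transitivity argument for the fixed set and your explicit checks that \(\mb Q_0\) is even symmetric about \(0\) and that \(\mb Q^\star\) is not a Dirac measure fill in steps the paper only states briefly.
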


\begin{remark}
Note that although the conditions of Theorem~\ref{thm:master_theorem} are sufficient for statistic recovery, they are not always necessary. Indeed, correlation is recovered in Corollary~\ref{cor:elliptical_sym} through the covariance, even though it does not itself satisfy Condition~(3) for \( d \ge 2 \); see Appendix~\ref{apdx:corr_not_compatible} for a discussion.
\end{remark}

\section{von Mises--Fisher family: new guarantees under rotational symmetry}\label{sec:sphere}

So far, we have demonstrated that previously known statistic recovery results arise as special instances of our general framework. We now use this framework to obtain a new recovery guarantee for variational inference on the sphere. Specifically, we show that when the target exhibits rotational symmetry, variational inference in a von Mises--Fisher family can exactly recover the axis of symmetry. Unlike the location--scale examples, where the relevant symmetry, variational family, and recovered statistic were known in advance, here we show how our framework can be applied more organically: we start from a symmetry of the target distribution, determine a statistic that is pinned down by it, and purposely select a stable variational family in order to guarantee its recovery. This section therefore not only establishes a new result, but also provides a blueprint illustrating how our framework can be applied to new symmetries and statistics beyond the examples considered here.

Let \(\mc X = \mc S^{d-1}\) be the unit sphere in \(\mb R^d\), \(d \ge 3\), equipped with its Borel \(\sigma\)-algebra \(\mc B\), and let \(\mc P\) denote the associated set of probability measures. To ease the presentation, we work with distributions that are absolutely continuous with respect to the uniform probability measure \(\sigma\) on the unit sphere, which we collect inside the set \(\mc P_{\sigma} \subseteq \mc P\). For any \(\pi \in \mc P_{\sigma}\), we denote its density with respect to \(\sigma\) by \(f_{\pi}\). 

\begin{definition}[Rotational symmetry]
A distribution \(\pi \in \mc P_{\sigma}\) is \emph{rotationally symmetric} about axis direction \(w \in \mc S^{d-1}\) if there exists a measurable function \(\psi \colon [-1,1] \to \mb R_{\ge 0}\) such that
\[
f_{\pi}(x) = \psi(w^\top x), \qquad \text{for } \sigma\text{-a.e. } x \in \mc S^{d-1}.
\]
We call \(\psi\) the \emph{axial profile} of \(\pi\). If \(\psi\) is constant, then \(\pi\) is the uniform distribution on \(\mc S^{d-1}\).
\end{definition}

Suppose we are given a non-uniform target \(\mb P \in \mc P_{\sigma}\) that is rotationally symmetric about \(u \in \mc S^{d-1}\). Observe that the level sets of the map \(x \mapsto u^\top x\) are precisely the parallels orthogonal to \(u\). Thus, a rotationally symmetric distribution may vary only with the polar angle away from \(u\). In particular, it is invariant under the group \(\mc G \coloneqq \{ g_R \mid Ru = u,\ R \in \mathrm{SO}(d) \}\), where \(\mathrm{SO}(d)\) is the special orthogonal group of orthogonal \(d \times d\) matrices with determinant \(1\), and \(g_R(x) = Rx\) for all \(x \in \mc S^{d-1}\). Indeed, \((g_R)_\# \mb{P} = \mb{P}\) for all \(g_R \in \mc G\), since by rotation invariance of \(\sigma\) we have for \(\sigma\)-a.e. \(x \in \mc S^{d-1}\),
\[
f_{(g_R)_\# \mb P}(x)=f_{\mb P}(R^{-1}x)=\psi(u^\top R^{-1}x)=\psi((Ru)^\top x)=\psi(u^\top x)=f_{\mb P}(x).
\]
Given this target symmetry, we now ask which statistic, if any, it pins down. A natural first candidate is the axis direction. However, if \(\pi \in \mc{P}_{\sigma}\) is rotationally symmetric about \(w\) with axial profile \(\psi\), it is also rotationally symmetric about \(-w\) with axial profile \(\tilde{\psi}(t) \coloneqq \psi(-t)\). Thus, the most the symmetry structure can uniquely determine is a one-dimensional subspace of \(\mb R^d\). More precisely, if \(\pi \in \mc P_{\sigma}\) is non-uniform and rotationally symmetric about both \(w_1, w_2 \in \mc S^{d-1}\), then \(\operatorname{span}(w_1) = \operatorname{span}(w_2)\); proof in Appendix~\ref{apdx:unique_axis}. We therefore consider the space of lines through the origin in \( \mb{R}^d \) denoted by \( \mb{R}P^{d-1} \), let \(\mc P_{\mc A} \subseteq \mc P_{\sigma}\) be the set of all non-uniform rotationally symmetric distributions, and define the axis statistic \(\mc A \colon \mc P \rightharpoonup \mb{R}P^{d-1} \) on \(\operatorname{dom}(\mc A) = \mc P_{\mc A}\) by \(\mc A(\pi) = \operatorname{span}(w)\), where \(w\) is any axis direction of rotational symmetry of \(\pi\). \(\mc{P}_{\mc{A}}\) is \(\mc G\)-stable, since for any \(\pi \in \mc{P}_{\mc{A}}\) rotationally symmetric about \(w\) with axial profile \(\psi\) and any \(g_R \in \mc G \), 
\begin{equation*}
f_{(g_R)_\# \pi}(x) = f_{\pi}\left(R^{-1} x\right) = \psi(w^\top R^{-1}x) = \psi\left((Rw)^\top x\right), \qquad \text{for } \sigma\text{-a.e. } x \in \mc S^{d-1},
\end{equation*}
meaning \( (g_R)_\# \pi \) is rotationally symmetric about \(Rw\) with profile \( \psi \) and thus \((g_R)_\# \pi \in \mc P_{\mc A}\). Hence,
\begin{equation}\label{eq:axis_pushforward}
\mc A((g_R)_\# \pi) = \operatorname{span}(Rw) = R \operatorname{span}(w) = R \mc A(\pi),
\end{equation}
which implies that if \(\mc A(\pi_1) = \mc A(\pi_2)\), then \(\mc A((g_R)_\# \pi_1) = \mc A((g_R)_\# \pi_2)\) for all \(g_R \in \mc G \). By \eqref{eq:axis_pushforward}, we can define for every \(g_R \in \mc{G}\) the map \(\rho_{g_R}(L) \coloneqq RL\) on \(\operatorname{Im}(\mc{A}) = \mb{R}P^{d-1}\) to obtain the fixed set
\[
\Gamma_{\mc A}(\mc G) = \{ L \in \mb{R}P^{d-1} \mid \rho_{g_R}(L) = L,\ \forall g_R \in \mc G \} = \{ \operatorname{span}(u)\}.
\]
By construction, \(\mb P \in \mc P_{\mc A}\). We thus know that choosing a \(\mc{G}\)-stable family will guarantee the recovery of the axis statistic by a unique minimiser. One such family is the \emph{von Mises--Fisher family}. Each of its members is rotationally symmetric about a mean direction \(\nu \in \mc S^{d-1}\), and indexed by a concentration parameter \(\kappa > 0\) controlling how quickly the density decays as the angle from \(\nu\) increases.
\begin{definition}[von Mises--Fisher family]
A von Mises--Fisher (vMF) family \(\mc Q \subseteq \mc P_{\sigma}\) is a two-parameter family \(\{ \mb Q_{\nu, \kappa} \mid \nu \in \mc S^{d-1}, \kappa \in \mb R_{> 0}\}\), where for a normalising constant \(c_d(\kappa) > 0\),
\[
f_{\mb Q_{\nu, \kappa}}(x)
=
c_d(\kappa)\exp(\kappa \nu^\top x), \qquad \text{for } \sigma\text{-a.e. } x \in \mc S^{d-1}.
\]
\end{definition}

Indeed, the von Mises--Fisher family is \(\mc G\)-stable. For all \((\nu, \kappa) \in \mc S^{d-1} \times \mb R_{> 0}\) and all \(g_R \in \mc G\), \((g_R)_\# \mb Q_{\nu, \kappa} = \mb Q_{R\nu, \kappa} \in \mc Q\) since by rotation invariance of \(\sigma\), we have that for \(\sigma\)-a.e. \(x \in \mc S^{d-1}\)
\begin{equation}\label{eq:sphere_pdf}
f_{(g_R)_\#\mb Q_{\nu, \kappa}}(x)
=
c_d(\kappa)\exp\!\big(\kappa \nu^\top R^{-1}x\big)
=
c_d(\kappa)\exp\!\big(\kappa (R\nu)^\top x\big) = f_{\mb Q_{R\nu, \kappa}}(x),
\end{equation}
and since \(\kappa > 0\), it follows that \(\mc{Q} \subseteq \mc{P}_{\mc{A}}\). We thus obtain the following new recovery guarantee.
\begin{figure}[t]
\centering
\includegraphics[width=\textwidth,height=0.166\textheight,keepaspectratio]{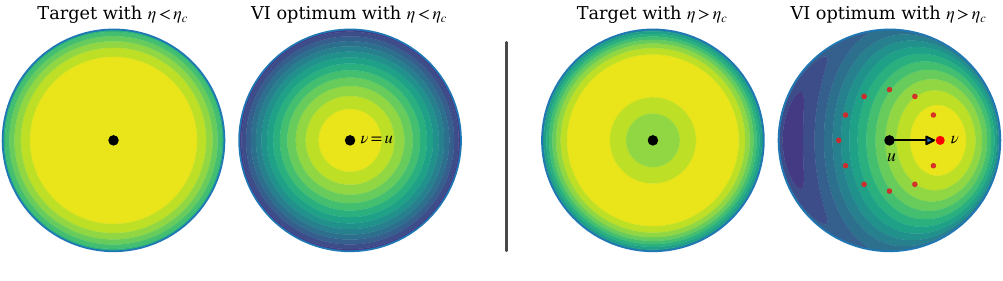}
\caption{Log-density contours of the target and variational posterior are shown using a Lambert azimuthal equal-area projection centred at the true direction \(u\). Left: \(\eta < \eta_c\) and the reverse KL has a unique minimiser at \(\nu = u\), recovering the symmetry axis. Right: \(\eta > \eta_c\) and the minimisers form a latitude circle. None of them, including the one found during optimisation, recovers the statistic.}
\label{fig:sphere_vi}
\end{figure}
\begin{corollary}\label{cor:sphere_cor}
Let \(\mb P \in \mc P_{\mc A}\) be rotationally symmetric about \(u \in \mc S^{d-1}\), and let \(\mc Q \subseteq \mc P_{\mc A}\) be the von Mises--Fisher family. If \(\mc D_f(\mb P \| \mb Q_{\nu, \kappa})\) has a unique minimiser \((\nu^\star, \kappa^\star)\) over \(\mc S^{d-1} \times \mb R_{> 0}\), then
\[
\mc A(\mb Q_{\nu^\star, \kappa^\star}) = \mc A(\mb P) = \operatorname{span}(u).
\]
\end{corollary}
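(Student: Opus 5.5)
The plan is to apply Theorem~\ref{thm:master_theorem} directly, with the statistic \(\mc A\), the group \(\mc G = \{g_R \mid Ru = u,\ R \in \mathrm{SO}(d)\}\), and the fixed set \(\Gamma_{\mc A}(\mc G)\) computed above. Most of the ingredients are already in the text preceding the statement, so the proof mainly assembles them and fills two small gaps.

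\textbf{Conditions.} Condition~(1) holds because \(\mb P \in \mc P_{\mc A}\) and we showed \((g_R)_\#\mb P = \mb P\) from the axial-profile form of the density. Condition~(2) is \eqref{eq:sphere_pdf}: \((g_R)_\#\mb Q_{\nu,\kappa} = \mb Q_{R\nu,\kappa}\). Because \(\kappa>0\), each such density is non-constant, so \(\mc Q \subseteq \mc P_{\mc A} = \operatorname{dom}(\mc A)\). Condition~(3) follows from \(\mc G\)-stability of \(\mc P_{\mc A}\) together with \eqref{eq:axis_pushforward}. Note that \eqref{eq:axis_pushforward} uses the well-definedness of \(\mc A\), which rests on the uniqueness-of-axis result in Appendix~\ref{apdx:unique_axis}; we take that result as given.

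\textbf{Uniqueness.} By the Remark after Theorem~\ref{thm:master_theorem}, uniqueness of the parameter minimiser \((\nu^\star,\kappa^\star)\) implies uniqueness of the minimising distribution. Theorem~\ref{thm:master_theorem} then gives \(\mc A(\mb P), \mc A(\mb Q_{\nu^\star,\kappa^\star}) \in \Gamma_{\mc A}(\mc G)\).

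\textbf{The fixed set.} The main step is to justify that \(\Gamma_{\mc A}(\mc G) = \{\operatorname{span}(u)\}\).
\begin{enumerate}
\item[(a)] \(\operatorname{span}(u)\) is fixed, since \(Ru = u\) for every \(g_R \in \mc G\).
\item[(b)] Conversely, let \(L = \operatorname{span}(w)\) be fixed by all of \(\mc G\). Decompose \(w = \alpha u + v\) with \(v \perp u\), and suppose \(v \neq 0\).
\item[(c)] Since \(d \ge 3\), the stabiliser of \(u\) in \(\mathrm{SO}(d)\) acts as \(\mathrm{SO}(d-1)\) on \(u^\perp\). This action is transitive on the unit sphere of \(u^\perp\), which is connected because \(d-1 \ge 2\).
\item[(d)] Choose \(R \in \mathrm{SO}(d-1)\) with \(Rv \notin \operatorname{span}(v)\); for instance, a rotation by \(\pi/2\) in a plane containing \(v\) inside \(u^\perp\). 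Then \(Rw = \alpha u + Rv\) is not \(\pm w\).
\item[(e)] This contradicts \(RL = L\). Hence \(v = 0\) and \(L = \operatorname{span}(u)\).
\end{enumerate}
Here \(d \ge 3\) is genuinely needed. For \(d=2\), the stabiliser of \(u\) in \(\mathrm{SO}(2)\) is trivial, so every line would be fixed.

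Finally, \(\mb P\) is rotationally symmetric about \(u\) by hypothesis, so \(\mc A(\mb P) = \operatorname{span}(u)\) directly. Since \(\Gamma_{\mc A}(\mc G)\) is a singleton, this forces \(\mc A(\mb Q_{\nu^\star,\kappa^\star}) = \operatorname{span}(u)\). Equivalently, \(\nu^\star = \pm u\), because \(\mc A(\mb Q_{\nu,\kappa}) = \operatorname{span}(\nu)\). I expect the only real obstacle to be the fixed-set computation in (b)–(e), in particular the careful use of \(d\ge3\); everything else is already verified in the text.
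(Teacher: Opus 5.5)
Your proposal is correct and follows the paper's own route. Like the paper, you apply Theorem~\ref{thm:master_theorem} to the axis statistic \(\mc A\) with the stabiliser group \(\mc G\) of \(u\), reuse the verifications of Conditions~(1)--(3) given in the text, and your argument in (b)--(e) that \(\Gamma_{\mc A}(\mc G) = \{\operatorname{span}(u)\}\) for \(d \ge 3\) validly fills in a step the paper only asserts.
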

Analogously to location and location--scale families, it is natural to consider a fixed-concentration von Mises--Fisher family, namely, the one-parameter subfamily of distributions that share the same concentration parameter \(\kappa_0 > 0\). Since the pushforward action leaves the concentration parameter unchanged as per \eqref{eq:sphere_pdf}, Corollary~\ref{cor:sphere_cor} equally applies to fixed-concentration vMF families.

\begin{remark}
As with all corollaries of Theorem~\ref{thm:master_theorem}, uniqueness of the minimiser is necessary for the recovery to be guaranteed. In what follows, we highlight this with a concrete example.
\end{remark}

Fix \(\lambda \ne 0\) and \(u \in \mc S^{d-1}\) and let \((\mb P_{\lambda, \eta})_{\eta > 0} \subseteq \mc P_{\mc A}\) be a one-parameter family of target distributions such that for each \(\eta > 0\), \(\mb P_{\lambda, \eta}\) is rotationally symmetric about \(u\) with axial profile
\[
\psi_{\lambda, \eta}(t) \coloneqq Z^{-1}_{\lambda, \eta} \exp\left( \lambda t - \eta t^2 \right), \qquad t \in [-1, 1],
\]
where \(Z_{\lambda, \eta} > 0\) is a normalising constant. Let \(\mc Q_{\kappa_0}\) be a fixed-concentration von Mises--Fisher family for some \(\kappa_0 > 0\) and consider the reverse KL objective \(\mc D_{\mathrm{KL}}(\mb Q_{\nu, \kappa_0} \| \mb P_{\lambda, \eta})\). Since \(\eta > 0\), \(\mb P_{\lambda, \eta}\) is not a vMF distribution and the model is misspecified. Despite that, there exists a critical threshold \(\eta_c(d, \lambda, \kappa_0) > 0\) such that for all \(0 < \eta \le \eta_c\), the objective \(\mc D_{\mathrm{KL}}(\mb Q_{\nu, \kappa_0} \| \mb P_{\lambda, \eta})\) has a unique minimiser \(\nu^\star\) over \(\mc S^{d-1}\), and exact recovery holds with \(\mc A(\mb Q_{\nu^\star,\kappa_0}) = \mc A(\mb P_{\lambda,\eta})\). By contrast, for every \(\eta > \eta_c\), the minimiser set is non-singleton and no minimiser recovers the true axis statistic. This behaviour is visualised for \(d=3\) in Figure~\ref{fig:sphere_vi}, with an explicit construction given in Appendix~\ref{apdx:sphere_construction}.

\section{Discussion}
In this work, we have developed a general symmetry-based theory of statistic recovery in variational inference, and have used it both to recover existing results and to derive novel guarantees. With our framework providing a modular blueprint for establishing statistic recovery, we see significant opportunities to apply it across different domains, symmetry classes, and variational families. At the same time, we identify at least two important directions for future work. First, while our theory provides sufficient conditions for statistic recovery, we have shown that these conditions are not always necessary. Deriving necessary and sufficient conditions would therefore yield a complete characterisation of statistic recovery under symmetry in variational problems. Second, it would be valuable to extend the theory to settings where the target is only approximately symmetric. Such an extension could unlock new guarantees for a broad class of inference problems that exhibit substantial structure but do not exactly satisfy our assumptions --- a direction we leave for future work.

\begin{ack}
    D. Marks acknowledges funding from a G-Research graduate scholarship. D. Paccagnan was partially supported by the EPSRC grant EP/Y001001/1, funded by the International Science Partnerships Fund (ISPF) and UKRI; and by an Imperial-MIT seed fund. 
\end{ack}

\bibliographystyle{plainnat}
\bibliography{references}

\newpage

\appendix

\section{Additional material for Section~\blackref{sec:sym_rec}}\label{apdx:sym_rec}

\subsection{Overview of \(f\)-divergences}\label{apdx:f_div}
In this appendix, we give a formal definition of \(f\)-divergences, which underpin our formulation of the variational inference problem throughout the paper. Let \((\mc X, \mc B)\) be a measurable space, and denote by \(\mc P\) the set of associated probability measures. Let \(\mb P, \mb Q \in \mc P\) be two such probability measures, and let \(\mu\) be any \( \sigma \)-finite measure on \((\mc X, \mc B)\) such that \(\mb P \ll \mu\) and \(\mb Q \ll \mu\), e.g. \(\mu = \mb P + \mb Q\). Write \(p = d\mb P / d\mu\) and \(q = d\mb Q / d\mu\) for the corresponding densities. Let \(f \colon \mb R_{> 0} \to \mb R\) be a convex function with \(f(1) = 0\), and define the extensions:
\[
f(0) \coloneqq \lim_{t \downarrow 0} f(t), \qquad f'(\infty) \coloneqq \lim_{t \downarrow 0} t f\left(\frac{1}{t}\right).
\]
The \(f\)-divergence generated by \(f\), denoted \(\mc D_f \colon \mc P \times \mc P \to [0, \infty]\), is defined by 
\[
\mc D_f(\mb P \| \mb Q) \coloneqq \int_{\{q > 0\}} q f\left(\frac{p}{q}\right)\, d\mu + f'(\infty)\, \mb P(\{q = 0\}),
\]
with the convention that \(0 \cdot \infty = 0\). This definition follows \citet[Def. 7.1]{polyanskiy_information_2025}.\footnote{Note that \citet{polyanskiy_information_2025} work with the standing assumption that \( (\mc{X}, \mc{B}) \) is a standard Borel space. However, their \(f\)-divergence definition extends verbatim to the case of an arbitrary measurable space, such as the one considered here.} A number of popular divergences can be recovered as special cases of \(f\)-divergences with different generators. These include the Kullback--Leibler (KL) divergence with \(f(t) = t \log t\), the \(\chi^2\)-divergence with \(f(t) = (t-1)^2\), the total variation distance with \(f(t) = \tfrac{1}{2}|t-1|\), and the squared Hellinger distance with \(f(t) = (1-\sqrt{t})^2\). For any \(f\)-divergence, swapping the order of the arguments again yields an \( f \)-divergence, now generated by \(\tilde{f}(t) = t f(1/t)\). That is, \( \mc{D}_f( \mb{Q} \| \mb{P}) = \mc{D}_{\tilde{f}}(\mb{P} \| \mb{Q}) \). For example, since the forward KL divergence is generated by \(f(t) = t \log t\), the reverse KL divergence is generated by \(\tilde{f}(t) = t f(1/t) = - \log t\). 

For a detailed overview of \(f\)-divergences, see \citet{sason_f_2016}.

\subsection{Proof of Theorem~\blackref{thm:master_theorem}}\label{apdx:proof_thm}

In this appendix, we provide a proof of Theorem~\ref{thm:master_theorem}. The proof is divided into three parts. In the first part, we show that \(f\)-divergences are invariant under simultaneous pushforward by measurable automorphisms. In the second part, we use this fact to show that Conditions~(1) and~(2) alone suffice to guarantee that a unique variational minimiser inherits the \(\mc{G}\)-invariance of the target. In the third part, we use the maps \(\rho_g\) on statistic values introduced in Section~\ref{sec:sym_rec}, whose well-definedness is guaranteed by Condition~(3), and show that when all conditions hold, the statistic of both the target and a unique variational minimiser is constrained inside the fixed set \(\Gamma_S(\mc G)\).
\paragraph{Setup.} We formally recall the setup of Section~\ref{sec:sym_rec}. Let \((\mc X,\mc B)\) be a measurable space, and denote by \(\mc P\) the set of probability measures on \((\mc X,\mc B)\). Let \(\mathrm{Aut}(\mc X,\mc B)\) be the group of measurable automorphisms, acting on \(\mc P\) by pushforward, and fix a subgroup \(\mc G \le \mathrm{Aut}(\mc X,\mc B)\). Throughout this section, we consider the variational inference problem \eqref{eq:vi_problem} with \(f\)-divergences
\[
\mc{Q}^\star \coloneqq \argmin_{\mb{Q} \in \mc{Q}} \mc{D}_f(\mb{P} \| \mb{Q}).
\]
For the first two parts of the proof, which concern only the variational problem and the action of \(\mc G\) on distributions, we work with a general \(\mb{P} \in \mc P\) and \(\mc{Q} \subseteq \mc P\). For the third part, we additionally consider a statistic \(S \colon \mc P \rightharpoonup \mc Y\), with domain \(\operatorname{dom}(S)\), and ultimately return to the assumption that \(\mb{P} \in \operatorname{dom}(S)\) and \(\mc{Q} \subseteq \operatorname{dom}(S)\) for the last step of the proof of Theorem~\ref{thm:master_theorem}. Whenever Conditions~(1)--(3) are invoked below, we mean the three conditions stated in Section~\ref{sec:sym_rec}.

We begin with the first part of the proof, and show that \(f\)-divergences are invariant under simultaneous pushforward by measurable automorphisms.

\begin{lemma}[\(f\)-divergence invariance]\label{lemma:dpi}
For any \(\mb{P}, \mb{Q} \in \mc P\) and \(g \in \mathrm{Aut}(\mc X, \mc B)\),
\[
\mc{D}_f\!\left(g_\# \mb{P}\,\big\|\,g_\# \mb{Q}\right) = \mc{D}_f\!\left(\mb{P}\,\big\|\,\mb{Q}\right).
\]
\end{lemma}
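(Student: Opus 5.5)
We prove the identity directly from the definition of \(\mc D_f\) in Appendix~\ref{apdx:f_div}, by transporting densities along \(g\). The key point is that the definition does not depend on the choice of dominating measure. So we may compute \(\mc D_f(\mb P\|\mb Q)\) with respect to \(\mu\coloneqq \mb P+\mb Q\), and \(\mc D_f(g_\#\mb P\|g_\#\mb Q)\) with respect to \(g_\#\mu = g_\#\mb P + g_\#\mb Q\). Both choices are \(\sigma\)-finite (indeed finite) and dominate the respective pair. This sidesteps the independence-of-\(\mu\) issue, since we only ever use the canonical choice \(\mu=\mb P+\mb Q\) suggested in the appendix. If the definition is read as requiring independence of \(\mu\), we cite \citet{polyanskiy_information_2025} for it, noting that their argument does not use the standard Borel assumption.

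\textbf{Step 1: the densities transform by composition with \(g^{-1}\).} Write \(p=d\mb P/d\mu\) and \(q=d\mb Q/d\mu\). We claim that
\[
\frac{d\,g_\#\mb P}{d\,g_\#\mu}=p\circ g^{-1}
\qquad\text{and}\qquad
\frac{d\,g_\#\mb Q}{d\,g_\#\mu}=q\circ g^{-1}.
\]
These functions are measurable because \(g^{-1}\) is measurable. For \(A\in\mc B\), the change-of-variables formula for pushforwards gives
\[
\int_A p\circ g^{-1}\,d(g_\#\mu)=\int_{g^{-1}(A)} p\,d\mu=\mb P(g^{-1}(A))=g_\#\mb P(A),
\]
and the same computation works for \(q\).

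\textbf{Step 2: transport both terms of the divergence.} The positivity set transforms as \(\{q\circ g^{-1}>0\}=g(\{q>0\})\). By the change-of-variables formula,
\[
\int_{\{q\circ g^{-1}>0\}} (q\circ g^{-1})\, f\!\left(\frac{p\circ g^{-1}}{q\circ g^{-1}}\right) d(g_\#\mu)
=\int_{\{q>0\}} q\, f\!\left(\frac{p}{q}\right) d\mu.
\]
This applies to the integrand in the extended sense of the definition: it is bounded below by an affine function of \(p\) and \(q\) by convexity, so the integral is well defined in \((-\infty,\infty]\), and change of variables holds for such integrands. For the singular term,
\[
g_\#\mb P(\{q\circ g^{-1}=0\})=\mb P(g^{-1}(g(\{q=0\})))=\mb P(\{q=0\}),
\]
so the term \(f'(\infty)\,\mb P(\{q=0\})\) is unchanged, including under the convention \(0\cdot\infty=0\). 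Adding the two terms gives the lemma.

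\textbf{Main obstacle.} The main care point is the integrability bookkeeping in Step 2: the change-of-variables formula must be justified for an integrand that may take the value \(+\infty\) or fail to be absolutely integrable. We handle this by splitting the integrand into positive and negative parts. The negative part is integrable because \(f(t)\ge f'(1^+)(t-1)\) yields a bound of the form \(q f(p/q)\ge c(p-q)\). Change of variables then holds for each nonnegative part separately, via monotone convergence from indicator functions.
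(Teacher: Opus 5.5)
Your proposal is correct and follows essentially the same route as the paper. Both arguments transport the densities as \(p\circ g^{-1}, q\circ g^{-1}\) with respect to \(g_\#\mu\), apply change of variables to the integral term, and use \(g^{-1}(\{q\circ g^{-1}=0\})=\{q=0\}\) for the singular term. The differences are minor: you handle integrability by splitting the integrand into positive and negative parts using the affine minorant \(q f(p/q)\ge c(p-q)\), whereas the paper replaces \(f\) by the equivalent nonnegative generator \(f(t)-\alpha(t-1)\) with \(\alpha\in\partial f(1)\) (after checking this leaves \(\mc D_f\) unchanged); the paper also uses an arbitrary \(\sigma\)-finite dominating measure instead of \(\mb P+\mb Q\).
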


\begin{proof}
    Let \(\mu\) be a \(\sigma\)-finite common dominating measure of \(\mb{P}\) and \(\mb{Q}\), write \(p = d\mb{P} / d\mu\) and \(q = d\mb{Q} / d\mu\) for the corresponding densities, and define \(\tilde{\mu} = g_\# \mu\). It is easy to verify that \(\tilde{\mu}\) is \(\sigma\)-finite and that \(g_\# \mb{P}, g_\# \mb{Q} \ll \tilde{\mu}\).
Define \(\tilde p := p\circ g^{-1}\) and \(\tilde q := q\circ g^{-1}\). Then for every \(A\in\mc B\),
\[
\int_A \tilde p\,d\tilde\mu
= \int_{\mc X} \mathbf 1_A\, (p\circ g^{-1})\, d\tilde\mu
= \int_{\mc X} (\mathbf 1_A\circ g)\, p\, d\mu
= \int_{g^{-1}(A)} p\, d\mu
= \mb P(g^{-1}(A))
= (g_\#\mb P)(A),
\]
meaning \(\tilde p = d(g_\#\mb P)/d\tilde\mu\), and similarly \(\tilde q = d(g_\#\mb Q)/d\tilde\mu\). We first reduce to the case \(f\ge 0\). For any \(a\in\mb R\), define
\[
h_a(t):=f(t)+a(t-1), \qquad t>0.
\]
Then \(h_a\) is convex, \(h_a(1)=0\), and its extensions satisfy
\[
h_a(0)=f(0)-a,\qquad h_a'(\infty)=f'(\infty)+a.
\]
Moreover,
\begin{align*}
\mc D_{h_a}(\mb P\|\mb Q)
&= \int_{\{q>0\}} q\Bigl(f(p/q)+a(p/q-1)\Bigr)\,d\mu
   + \bigl(f'(\infty)+a\bigr)\mb P(\{q=0\}) \\
&= \mc D_f(\mb P\|\mb Q)
   + a\left[\int_{\{q>0\}}(p-q)\,d\mu + \mb P(\{q=0\})\right].
\end{align*}
Now, by linearity of the integral
\[
\int_{\{q>0\}}(p-q)\,d\mu + \mb P(\{q=0\})
= \mb P(\{q>0\}) - \mb Q(\{q>0\}) + \mb P(\{q=0\}),
\]
and since \(\mb{Q}(\{ q = 0 \}) = 0\) we have \(\mb Q(\{q>0\})=1\). Hence
\[
\int_{\{q>0\}}(p-q)\,d\mu + \mb P(\{q=0\}) = 1-1 = 0,
\]
and therefore
\[
\mc D_{h_a}(\mb P\|\mb Q)=\mc D_f(\mb P\|\mb Q).
\]
Now choose \(\alpha\in\partial f(1)\). By convexity \citep{niculescu_convex_2018},
\[
f(t)\ge f(1)+\alpha(t-1)=\alpha(t-1),\qquad t>0,
\]
so \(h_{-\alpha}(t)=f(t)-\alpha(t-1)\ge 0\). Therefore, replacing \(f\) by the equivalent generator \(h_{-\alpha}\) if necessary, we may assume from the outset that \(f\ge 0\). With this reduction in hand, define
\[
\Phi(x):=
\begin{cases}
\tilde q(x)\, f\!\bigl(\tilde p(x)/\tilde q(x)\bigr), & \tilde q(x)>0,\\
0, & \tilde q(x)=0.
\end{cases}
\]
Since \(f\ge 0\), \(\Phi\) is nonnegative and measurable, so a change of variables yields \citep{tao_introduction_2011}
\[
\int_{\mc X}\Phi\,d\tilde\mu
= \int_{\mc X}\Phi\circ g\,d\mu.
\]
Using \(\tilde p\circ g = p\) and \(\tilde q\circ g = q\), we get \(
\Phi\circ g
= \mathbf 1_{\{q>0\}}\, q\, f(p/q)\). Therefore,
\[
\int_{\{\tilde q>0\}} \tilde q\, f(\tilde p/\tilde q)\, d\tilde\mu
=
\int_{\{q>0\}} q\, f(p/q)\, d\mu.
\]

For the singular term, since \(\tilde q=q\circ g^{-1}\), we have \(
g^{-1}(\{\tilde q=0\})=\{q=0\}\), and hence
\[
(g_\#\mb P)(\{\tilde q=0\})
= \mb P(g^{-1}(\{\tilde q=0\}))
= \mb P(\{q=0\}).
\]

Combining the two identities yields
\[
\mc D_f(g_\#\mb P\|g_\#\mb Q)
=
\int_{\{q>0\}} q\,f(p/q)\,d\mu
+ f'(\infty)\,\mb P(\{q=0\})
=
\mc D_f(\mb P\|\mb Q).
\]
This proves the claim.
\end{proof}

We now proceed with the second part of the proof, where we isolate the role of Conditions~(1) and~(2), namely the \(\mc G\)-invariance of the target and the \(\mc G\)-stability of the variational family, to show that symmetries of the target propagate to unique variational minimisers.

\begin{lemma}[Symmetry propagation]\label{lem:sym_rec}
Let \(\mc G \le \mathrm{Aut}(\mc X,\mc B)\). Assume \(\mb P \in \mc P\) is \(\mc G\)-invariant, and let \(\mc Q \subseteq \mc P\) be \(\mc G\)-stable. Then:
\begin{enumerate}[label=(\roman*)]
\item \(\mc Q^\star\) is \(\mc G\)-stable,
\item if \(\mc Q^\star = \{\mb Q^\star\}\) is a singleton, then \(\mb Q^\star\) is \(\mc G\)-invariant.
\end{enumerate}
\end{lemma}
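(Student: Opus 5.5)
The plan is to use Lemma~\ref{lemma:dpi} to show that the objective \(\mb Q \mapsto \mc D_f(\mb P \| \mb Q)\) is constant along \(\mc G\)-orbits within \(\mc Q\). Combined with \(\mc G\)-stability of \(\mc Q\), this means every pushforward of a minimiser is again a minimiser.

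For part (i), fix \(\mb Q^\star \in \mc Q^\star\) and \(g \in \mc G\). By Condition~(2), \(g_\# \mb Q^\star \in \mc Q\). By \(\mc G\)-invariance of the target and Lemma~\ref{lemma:dpi},
\[
\mc D_f(\mb P \| g_\# \mb Q^\star) = \mc D_f(g_\# \mb P \| g_\# \mb Q^\star) = \mc D_f(\mb P \| \mb Q^\star).
\]
Hence \(g_\# \mb Q^\star\) attains the minimal value over \(\mc Q\). To see that it is also no worse than any competitor, take any \(\mb Q \in \mc Q\) and write \(\mb Q = g_\#\bigl((g^{-1})_\# \mb Q\bigr)\), where \((g^{-1})_\#\mb Q \in \mc Q\) because \(\mc G\) is a group. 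Since the minimal value is attained at \(\mb Q^\star\), it equals the infimum over \(\mc Q\), so \(\mc D_f(\mb P \| g_\# \mb Q^\star) \le \mc D_f(\mb P \| \mb Q)\). Thus \(g_\# \mb Q^\star \in \mc Q^\star\), which gives \(g_\# \mc Q^\star \subseteq \mc Q^\star\).

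For part (ii), suppose \(\mc Q^\star = \{\mb Q^\star\}\). By (i), \(g_\# \mb Q^\star \in \mc Q^\star\) for every \(g \in \mc G\), so \(g_\# \mb Q^\star = \mb Q^\star\). This is exactly \(\mc G\)-invariance of \(\mb Q^\star\).

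The only delicate point is that the minimal value may be \(+\infty\). Even then the argument goes through, since all the identities hold in \([0,\infty]\) and membership in the argmin only requires attaining the infimum. The substantive work is already contained in Lemma~\ref{lemma:dpi}, so I expect no real obstacle beyond keeping the group-closure step (using \(g^{-1} \in \mc G\)) explicit.
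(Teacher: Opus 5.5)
Your proof is correct and follows the paper's argument: Lemma~\ref{lemma:dpi} together with \(\mc G\)-invariance of \(\mb P\) makes the objective constant along \(\mc G\)-orbits, \(\mc G\)-stability of \(\mc Q\) then makes every pushforward of a minimiser a minimiser, and uniqueness forces invariance. The detour writing \(\mb Q = g_\#\bigl((g^{-1})_\#\mb Q\bigr)\) is never actually used and can be dropped, since \(\mb Q^\star\) being a minimiser already gives \(\mc D_f(\mb P \| g_\#\mb Q^\star) = \mc D_f(\mb P \| \mb Q^\star) \le \mc D_f(\mb P \| \mb Q)\) for every \(\mb Q \in \mc Q\).
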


\begin{proof}
For any \(g \in \mc G\), the objective is invariant under simultaneous pushforward by Lemma~\ref{lemma:dpi}. Now if \(\mb P\) is \(\mc G\)-invariant, then \(g_\# \mb P = \mb P\) and hence
\begin{equation}\label{eq:orbit-const}
\mc D_f(\mb P \| g_\# \mb Q)
=
\mc D_f(g_\# \mb P \,\|\, g_\# \mb Q)
=
\mc D_f(\mb P \,\|\, \mb Q),
\end{equation}
for all \(\mb Q \in \mc Q\). Thus the variational objective is constant along \(\mc G\)-orbits.

\smallskip

\noindent\emph{(i)}
If \(\mc Q^\star\) is empty, then it is trivially \(\mc G\)-stable. Let \(\mb Q^\star \in \mc Q^\star\) and take \(g \in \mc G\). Since \(\mc Q\) is \(\mc G\)-stable, we have \(g_\# \mb Q^\star \in \mc Q\). Moreover, by \eqref{eq:orbit-const},
\[
\mc D_f(\mb P \,\|\, g_\# \mb Q^\star)
=
\mc D_f(\mb P \,\|\, \mb Q^\star).
\]
Because \(\mb Q^\star\) is a minimiser over \(\mc Q\), the right-hand side equals \(\inf_{\mb Q \in \mc Q}\mc D_f(\mb P \| \mb Q)\), so \(g_\# \mb Q^\star\) also attains this infimum. Thus \(g_\# \mb Q^\star \in \mc Q^\star\), proving that \(\mc Q^\star\) is \(\mc G\)-stable.

\smallskip

\noindent\emph{(ii)}
Assume \(\mc Q^\star = \{\mb Q^\star\}\). By \emph{(i)}, for every \(g \in \mc G\) we have \(g_\# \mb Q^\star \in \mc Q^\star\). Since \(\mc Q^\star\) contains only \(\mb Q^\star\), it follows that \(g_\# \mb Q^\star = \mb Q^\star\) for all \(g \in \mc G\), i.e.\ \(\mb Q^\star\) is \(\mc G\)-invariant.
\end{proof}

We conclude the proof by finally turning to Condition~(3). As shown in Section~\ref{sec:sym_rec}, this condition guarantees that for each \(g \in \mc G\), the map \(\rho_g \colon \operatorname{Im}(S) \to \operatorname{Im}(S)\) is well defined. We now combine this observation with the preceding lemmas to obtain our statistic recovery guarantee.

\begin{theorem}[Statistic recovery, restated]\label{thm:master_theorem_appendix}
Let \(S \colon \mc P \rightharpoonup \mc Y\) be a statistic, \(\mb P \in \operatorname{dom}(S)\) a target distribution, and \(\mc Q \subseteq \operatorname{dom}(S)\) a variational family. Assume the variational problem \eqref{eq:vi_problem} has a unique minimiser, denoted by \(\mb Q^\star\). If Conditions~(1)--(3) are satisfied for some group of symmetries \(\mc G\), then
\[
S(\mb P),\,S(\mb Q^\star)\in \Gamma_S(\mc G).
\]
\end{theorem}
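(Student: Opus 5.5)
The plan is to combine Lemma~\ref{lem:sym_rec} with the well-definedness of the maps \(\rho_g\) guaranteed by Condition~(3). The argument has two parallel halves, one for \(\mb P\) and one for \(\mb Q^\star\), and both reduce to the same observation. If \(\pi \in \operatorname{dom}(S)\) is \(\mc G\)-invariant, then \(S(\pi) \in \Gamma_S(\mc G)\).

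First I will prove this observation. Let \(\pi \in \operatorname{dom}(S)\) be \(\mc G\)-invariant and set \(y \coloneqq S(\pi)\), so \(y \in \operatorname{Im}(S)\). For any \(g \in \mc G\), Condition~(3) says \(\operatorname{dom}(S)\) is \(\mc G\)-stable, so \(g_\#\pi \in \operatorname{dom}(S)\). The definition of \(\rho_g\) may use any preimage of \(y\), and \(\pi\) is one, so
\[
\rho_g(y) = S(g_\#\pi) = S(\pi) = y,
\]
using \(g_\#\pi = \pi\). Since \(g\) was arbitrary, \(y \in \Gamma_S(\mc G)\).

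Second, I will apply the observation to the target. Condition~(1) gives \(\mb P \in \operatorname{dom}(S)\) and \(\mb P\) is \(\mc G\)-invariant, so \(S(\mb P) \in \Gamma_S(\mc G)\) immediately.

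Third, I will apply it to the minimiser. Conditions~(1) and~(2) are exactly the hypotheses of Lemma~\ref{lem:sym_rec}, and the variational problem has the unique minimiser \(\mb Q^\star\), so \(\mc Q^\star = \{\mb Q^\star\}\). Part~(ii) of that lemma then shows \(\mb Q^\star\) is \(\mc G\)-invariant. Moreover \(\mb Q^\star \in \mc Q \subseteq \operatorname{dom}(S)\), so the observation gives \(S(\mb Q^\star) \in \Gamma_S(\mc G)\).

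No step here is a real obstacle, because the analytic content (invariance of \(f\)-divergences and propagation of symmetry to the minimiser) is already in Lemmas~\ref{lemma:dpi} and~\ref{lem:sym_rec}. The only points needing care are bookkeeping: \(\rho_g\) must be evaluated through a legitimate preimage, and \(g_\#\pi\) must lie in \(\operatorname{dom}(S)\). Both are supplied by Condition~(3) together with \(\mb P \in \operatorname{dom}(S)\) and \(\mc Q \subseteq \operatorname{dom}(S)\).
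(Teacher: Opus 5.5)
Your proof is correct and follows the paper's argument. Like the paper, you use Condition~(3) for the well-definedness of \(\rho_g\), get \(\rho_g(S(\mb P)) = S(g_\#\mb P) = S(\mb P)\) from Condition~(1), and obtain \(\mc G\)-invariance of \(\mb Q^\star\) from part~(ii) of the symmetry-propagation lemma before repeating the same computation; the only difference is that you factor the shared step into a standalone observation about \(\mc G\)-invariant elements of \(\operatorname{dom}(S)\), whereas the paper writes it out twice.
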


\begin{proof}
By Condition~(3), for every \(g \in \mc G\), the map \(\rho_g\) introduced in Section~\ref{sec:sym_rec} is well defined on \(\operatorname{Im}(S)\). Since \(\mb P\) is \(\mc G\)-invariant by Condition~(1), for every \(g \in \mc G\) we have
\[
\rho_g(S(\mb P))=S(g_\#\mb P)=S(\mb P).
\]
Hence, by definition of \(\Gamma_S(\mc G)\),
\[
S(\mb P)\in \Gamma_S(\mc G).
\]
By assumption, \(\mc Q^\star=\{\mb Q^\star\}\). By Conditions~(1) and~(2), Lemma~\ref{lem:sym_rec}\emph{(ii)} implies that \(\mb Q^\star\) is \(\mc G\)-invariant. Moreover, \(\mb Q^\star\in \mc Q\subseteq \operatorname{dom}(S)\), so \(S(\mb Q^\star)\) is well defined. Therefore, for every \(g \in \mc G\),
\[
\rho_g(S(\mb Q^\star))=S(g_\#\mb Q^\star)=S(\mb Q^\star).
\]
Again by definition of \(\Gamma_S(\mc G)\), it follows that
\[
S(\mb Q^\star)\in \Gamma_S(\mc G).
\]

Combining the two inclusions yields
\[
S(\mb P),\,S(\mb Q^\star)\in \Gamma_S(\mc G).
\]
\end{proof}

\section{Additional material for Section~\blackref{sec:prev_res}}\label{apdx:prev_res}

\subsection{Presentation differences relative to Margossian and Saul}\label{apdx:differences}

In this appendix, we discuss the differences between our presentation of the recovery guarantees in location--scale families in Section~\ref{sec:prev_res} and that of \citet{margossian_variational_2025, margossian_generalized_2025}, whose results we recover. The first difference is that we choose to work with probability measures rather than densities, in order to follow the general blueprint of Section~\ref{sec:sym_rec}. Accordingly, our definitions of location--scale families, as well as of even and elliptical symmetry, are the measure-theoretic analogues of those presented in \citet{margossian_variational_2025, margossian_generalized_2025}. This places the results in a more general setting, but at the same time forfeits some of the convenient properties that follow from the density-level formulation. For example, for covariance recovery, in addition to the necessary assumption that the target and the base distribution of the variational family have finite second moments, we explicitly assume that both have non-zero variance --- a property that automatically follows from the existence of a density with respect to Lebesgue measure. Any similar minor differences in presentation are simply artefacts of our choice to work more generally with probability distributions rather than just probability densities.

The other, and arguably more important, difference in presentation is that, whereas we state all our results under the assumption that the variational minimiser is unique, \citet{margossian_variational_2025, margossian_generalized_2025} take the extra step of imposing additional regularity conditions, such as differentiability and somewhere-strict log-concavity of the target, in order to guarantee uniqueness. Because our theory focuses on the fundamental mechanism underpinning symmetry-induced statistic recovery, we are less concerned with the specific assumptions needed to ensure uniqueness in concrete settings. Accordingly, we have chosen to state our results in the most general way possible, retaining uniqueness as a standing assumption. That said, any regularity conditions that guarantee uniqueness of the variational minimiser can be substituted for this assumption in all our corollaries.

\subsection{Correlation does not satisfy Condition~(3)}\label{apdx:corr_not_compatible}

Here, we demonstrate the problems that arise when trying to apply Theorem~\ref{thm:master_theorem} directly to the correlation statistic. First, we show that the statistic does not satisfy Condition~(3) because its domain is not \(\mc{G}\)-stable. We then show that even after restricting the domain to distributions with finite second moment and positive-definite covariance to guarantee \(\mc{G}\)-stability, the correlation statistic still fails to satisfy the condition.

Recall the setting of Section~\ref{sec:prev_res} and, in particular, the correlation statistic \(\rho \colon \mc{P}_2 \rightharpoonup \mc{C}^d_+\), defined by 
\begin{equation}\label{eq:corr_apdx}
\rho(\pi) \coloneqq D(\pi)^{-1/2}\Sigma(\pi) D(\pi)^{-1/2}, \qquad D(\pi) \coloneqq \operatorname{diag}(\Sigma(\pi)),
\end{equation}
where \( \operatorname{dom}(\rho) = \mc P_2^\circ\) is the set of distributions with finite second moment and positive marginal variances, and \(\mc{C}^d_+\) is the set of positive semidefinite matrices with unit diagonal entries. Further, recall the relevant symmetry group:
\[
\mc{G}\coloneqq \{g_R\mid R\in \mathrm{O}(d)\},
\qquad
g_R(x)\coloneqq m+A_R(x-m),
\qquad
A_R\coloneqq M^{1/2}RM^{-1/2}.
\]

Assume \(d \ge 2\) and choose \(v \in \mb{R}^d\) with \(v_i \ne 0\) for every \(i\). Define
\[
\pi \coloneqq \frac{1}{2} \delta_v + \frac{1}{2} \delta_{-v}.
\]
Then \(\pi \in \mc{P}_2\) since it is finitely supported, and moreover \(\mu(\pi) = 0\) and \(\Sigma(\pi) = vv^\top\). Hence, for all \(i \in \{1, \dots, d\}\), we have
\[
[\Sigma(\pi)]_{ii} = v_i^2 > 0.
\]
So \(\pi \in \mc{P}_2^\circ\). Set \(z \coloneqq M^{-1/2}v \neq 0\). Since \(d \ge 2\), the hyperplane
\[
(M^{1/2}e_1)^\perp
=
\{w\in \mb R^d \mid \langle w,M^{1/2}e_1\rangle =0\}
\]
has dimension \(d-1 \ge 1\), and therefore intersects the sphere of radius \(\|z\|\). Choose \(w\in (M^{1/2}e_1)^\perp\) such that \(\|w\|=\|z\|\). By transitivity of \(\mathrm{O}(d)\) on the sphere, there exists \(R\in \mathrm{O}(d)\) such that
\[
Rz=w.
\]
For this choice of \(R\),
\[
[A_Rv]_1
=
e_1^\top M^{1/2}RM^{-1/2}v
=
e_1^\top M^{1/2}w
=
\langle M^{1/2}e_1,w\rangle
=
0.
\]
Since covariance transforms under the affine map \(g_R\) by congruence with \(A_R\),
\[
\Sigma((g_R)_\#\pi)
=
A_R\Sigma(\pi)A_R^\top
=
A_Rvv^\top A_R^\top
=
(A_Rv)(A_Rv)^\top.
\]
Its first diagonal entry is therefore
\[
[\Sigma((g_R)_\#\pi)]_{11}
=
[A_Rv]_1^2
=
0.
\]
Hence \((g_R)_\#\pi\notin \mc P_2^\circ\). This proves that \(\mc P_2^\circ\) is not \(\mc G\)-stable, and thus \(\rho\) does not satisfy Condition~(3).

Since the problem already arises with the domain, one may try to suitably restrict it to ensure \(\mc{G}\)-stability, with the hope that the remaining part of the condition will go through. We now show this not to be the case. Let \(\mc{P}_2^{\Box}\) be the set of distributions with finite second moment and positive-definite covariance, and let \(\mc{C}^d_{++}\) be the set of positive-definite matrices with unit diagonal entries. Clearly \(\mc{P}_2^{\Box} \subset \mc{P}_2^\circ\) and \(\mc{C}^d_{++} \subset \mc{C}^d_+\), so we can define the correlation statistic in the same way as in \eqref{eq:corr_apdx}.

Let \(\pi\in \mc P_2^{\Box}\) and \(g_R\in \mc G\). Since \(g_R\) is affine, \((g_R)_\#\pi\in \mc P_2\). Moreover, by \eqref{eq:cov_induced_action},
\[
\Sigma((g_R)_\#\pi)=A_R\Sigma(\pi)A_R^\top.
\]
Because \(M\in \mc S_{++}^d\) and \(R\in \mathrm{O}(d)\), \(A_R\) is invertible. Therefore, since \(\Sigma(\pi)\in \mc S_{++}^d\), it follows that
\[
A_R\Sigma(\pi)A_R^\top \in \mc S_{++}^d.
\]
Thus \((g_R)_\#\pi\in \mc P_2^{\Box}\), proving \(\mc G\)-stability. However, the correlation statistic still does not satisfy Condition~(3). It suffices to give a counterexample. We work in \(\mb{R}^2\) and consider the spherical subcase \(m=0\), \(M=I_2\), for which
\[
\mc G=\{g_R\mid R\in \mathrm{O}(2)\},
\qquad
g_R(x)=Rx.
\]
Define
\[
\pi_1\coloneqq \mc N(0,I_2),
\qquad
\pi_2\coloneqq \mc N\!\left(0,
\begin{pmatrix}
1&0\\
0&3
\end{pmatrix}
\right).
\]
Clearly, both belong to \(\mc P_2^{\Box}\), and
\[
\rho(\pi_1)=I_2=\rho(\pi_2),
\]
since both covariance matrices are diagonal with positive entries. Now let
\[
R\coloneqq \frac{1}{\sqrt 2}
\begin{pmatrix}
1&-1\\
1&1
\end{pmatrix}\in \mathrm{O}(2).
\]
Then \(\Sigma((g_R)_\#\pi_1)=I_2\), so
\[
\rho((g_R)_\#\pi_1)=I_2.
\]
By contrast,
\[
\Sigma((g_R)_\#\pi_2)
=
R
\begin{pmatrix}
1&0\\
0&3
\end{pmatrix}
R^\top
=
\begin{pmatrix}
2&-1\\
-1&2
\end{pmatrix}.
\]
Therefore
\[
D((g_R)_\#\pi_2)
=
\operatorname{diag}(\Sigma((g_R)_\#\pi_2))
=
2 I_2,
\]
and hence
\[
\rho((g_R)_\#\pi_2)
=
\left(2 I_2\right)^{-1/2}
\begin{pmatrix}
2&-1\\
-1&2
\end{pmatrix}
\left(2 I_2\right)^{-1/2}
=
\begin{pmatrix}
1&-1/2\\
-1/2&1
\end{pmatrix}.
\]
Thus, there exists \(g_R \in \mc{G}\) such that
\[
\rho(\pi_1)=\rho(\pi_2)
\qquad\text{but}\qquad
\rho((g_R)_\#\pi_1)\neq \rho((g_R)_\#\pi_2).
\]
This violates the implication required by Condition~(3).

\section{Additional material for Section~\blackref{sec:sphere}}\label{apdx:sphere}

\subsection{Uniqueness of the axis of symmetry}\label{apdx:unique_axis}

In this appendix, we formally prove that if a non-uniform distribution is rotationally symmetric about two axis directions, then these determine the same line through the origin in \( \mb{R}^d \). 

\begin{proposition}[Uniqueness of the rotational axis]\label{prop:sphere-axis-unique}
Fix \(d\ge 3\). Let \(\mb \pi \in \mc{P}_{\sigma} \) be a non-uniform distribution on \(\mc S^{d-1}\) with density \(f_{\mb \pi}\), and suppose it is rotationally symmetric about both \(w_1,w_2\in \mc S^{d-1}\), i.e., there exist measurable functions \(\psi,\varphi:[-1,1]\to \mb R_{\ge 0}\) such that
\[
f_{\pi}(x)=\psi(w_1^\top x)=\varphi(w_2^\top x) \qquad\text{for \(\sigma\)-a.e. }x\in \mc S^{d-1}.
\]
Then
\[
\operatorname{span}(w_1)=\operatorname{span}(w_2).
\]
\end{proposition}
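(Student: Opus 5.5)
We argue by contradiction: assume \(w_1 \neq \pm w_2\) and show that \(f_\pi\) must then be \(\sigma\)-a.e. constant, i.e. \(\pi\) is uniform. The idea is to use invariance rather than pointwise identities, because densities are only defined \(\sigma\)-a.e. and \(\psi,\varphi\) need not be continuous. Let \(H_i \le \mathrm{SO}(d)\) be the stabiliser of \(w_i\), and let \(K\) be the closed subgroup generated by \(H_1 \cup H_2\). By the computation preceding \eqref{eq:axis_pushforward}, \((g_R)_\#\pi = \pi\) for every \(R \in H_1 \cup H_2\). Since the set of \(R\) with \(R_\#\pi=\pi\) is a subgroup, it contains the group generated by \(H_1\cup H_2\).

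This subgroup is also closed. The map \(R \mapsto R_\#\pi\) is weakly continuous, and in fact continuous in total variation because \(\pi \ll \sigma\): translation is continuous in \(L^1(\sigma)\) under the rotation action. Hence \(\pi\) is \(K\)-invariant. It therefore suffices to show that \(K\) acts transitively on \(\mc S^{d-1}\). Once that is established, the standard result that the only \(K\)-invariant probability measure on a homogeneous space \(K/\mathrm{Stab}\) is the normalised Haar-induced measure gives \(\pi = \sigma\). Uniqueness can be seen directly by averaging: for a continuous test function \(h\), \(\int h\,d\pi = \int\!\int h(Rx)\,dR\,d\pi(x)\), and the inner integral is constant by transitivity. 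This contradicts non-uniformity.

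The main obstacle is transitivity of \(K\), and this is where \(d\ge 3\) enters. Take \(x,y \in \mc S^{d-1}\). The \(H_1\)-orbit of \(x\) is the full parallel \(\{z \in \mc S^{d-1} : w_1^\top z = w_1^\top x\}\). This needs \(d\ge3\), since for \(d-1\ge 2\) the stabiliser \(\mathrm{SO}(d-1)\) acts transitively on spheres in \(w_1^\perp\) of positive radius. The same holds for \(H_2\) with the parallels about \(w_2\).

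Write \(\theta \in (0,\pi)\) for the angle between \(w_1\) and \(w_2\), and let \(\alpha(z)\) denote the polar angle of \(z\) from \(w_1\). Alternating \(H_1\)- and \(H_2\)-moves changes \(\alpha\) as follows. From a point at polar angle \(\alpha\) about \(w_1\), an \(H_2\)-move reaches points whose polar angle about \(w_1\) fills a whole interval around \(\alpha\) of length about \(2\min(\theta,\pi-\theta)\), truncated to \([0,\pi]\). This follows from the triangle inequality on the sphere together with connectedness of the \(w_2\)-parallel. Iterating finitely many such moves reaches every value of \(\alpha\), in particular \(\alpha(y)\), and a final \(H_1\)-move then hits \(y\) itself. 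Hence the orbit of \(x\) under the group generated by \(H_1\cup H_2\) is all of \(\mc S^{d-1}\).

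As an alternative that avoids the orbit argument, one can note that the Lie algebra generated by \(\mathfrak{so}(w_1^\perp)\) and \(\mathfrak{so}(w_2^\perp)\) is all of \(\mathfrak{so}(d)\) when \(w_1\neq\pm w_2\) and \(d\ge 3\). This gives \(K=\mathrm{SO}(d)\) directly. I would present the orbit argument as the main proof and keep this as a remark.
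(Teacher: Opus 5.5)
Your argument is correct, but it reaches the key step by a different route from the paper. Both proofs start the same way: the rotations preserving \(\pi\) form a closed subgroup of \(\mathrm{SO}(d)\), by weak continuity, and this subgroup contains both stabilisers \(H_1,H_2\) (the paper's \(\mc G_{w_1},\mc G_{w_2}\)).

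The paper then passes to the identity component \(\mathscr{T}(\pi)^0\). It notes that \(H_1,H_2\) are connected and conjugate to \(\mathrm{SO}(d-1)\), and invokes a classification result of Dynkin: \(\mathrm{SO}(d-1)\) is a maximal connected closed subgroup of \(\mathrm{SO}(d)\). Since \(H_2\not\subseteq H_1\) when \(w_1\neq\pm w_2\), this forces \(\mathscr{T}(\pi)=\mathrm{SO}(d)\), and uniqueness of the rotation-invariant measure finishes the proof.

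You instead show directly that the group generated by \(H_1\cup H_2\) is transitive on \(\mc S^{d-1}\), by chaining parallels. You then conclude by Haar averaging over the compact group \(K\). Your Lie-algebra remark is essentially the infinitesimal form of the paper's maximality argument.

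The paper's route is shorter, but it delegates the real content to a nontrivial external theorem and needs some connectedness bookkeeping. Yours is elementary and self-contained, and needs neither identity components nor \(K=\mathrm{SO}(d)\). It also makes the role of \(d\ge 3\) visible: \(\mathrm{SO}(d-1)\) must act transitively on parallels, and the \((d-2)\)-sphere parallels must be connected. This is exactly what fails for \(d=2\).

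One statement needs tightening when you write it up. For an individual point, an \(H_2\)-move need not change \(\alpha\) by a uniform amount. A point at or near \(\pm w_2\) has an \(H_2\)-orbit that is (nearly) a single point, and in general \(\alpha\) can sit at an endpoint of the reachable range.

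The uniform step does hold for whole \(w_1\)-parallels, which is legitimate because \(H_1\)-moves sweep them:
\begin{itemize}
\item the \(w_1\)-parallel at polar angle \(\alpha\) meets the \(w_2\)-parallel at angle \(\beta\) if and only if \(|\alpha-\theta|\le\beta\le\pi-|\pi-\alpha-\theta|\);
\item for \(\theta\le\pi/2\), these \(\beta\)-intervals for \(\alpha\) and \(\alpha'\) overlap whenever \(|\alpha-\alpha'|\le 2\theta\);
\item for \(\theta>\pi/2\), replace \(w_2\) by \(-w_2\), which has the same stabiliser.
\end{itemize}
Chaining finitely many such steps then covers \([0,\pi]\), and a final \(H_1\)-move reaches \(y\).
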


\begin{proof}
    Let \(\mathscr{T}(\pi) \coloneqq \{ R \in \mathrm{SO}(d) ~|~ (g_R)_\# \pi = \pi \}\), denote the subgroup of \(\mathrm{SO}(d)\) under which \(\pi\) is invariant. \( \mathscr{T}(\pi) \) is closed in \(\mathrm{SO}(d)\): if \(R_n \to R\) in \(\mathrm{SO}(d)\) with \((g_{R_n})_\# \pi = \pi\) for all \(n\), then for every \(h \in C(\mc{S}^{d-1})\), the functions \(x \mapsto h(R_nx)\) converge uniformly on \(\mc{S}^{d-1}\) to \(x \mapsto h(Rx)\) and hence 
    \[
    \int_{\mc{S}^{d-1}} h(x) d\pi(x) = \int_{\mc{S}^{d-1}} h(R_nx) d\pi(x) \longrightarrow \int_{\mc{S}^{d-1}} h(Rx) d\pi(x) = \int_{\mc{S}^{d-1}} h(x) d((g_R)_\# \pi)(x)
    \]
which guarantees that \((g_R)_\# \pi = \pi\) and therefore \(R \in \mathscr{T}(\pi)\). Let \(\mathscr{T}(\mb{\pi})^0 \le \mathscr{T}(\pi) \) denote the identity component of \(\mathscr{T}(\pi)\), which is a connected and closed subgroup. As \(\mathscr{T}(\pi)\) is closed in \(\mathrm{SO}(d)\), \(\mathscr{T}(\mb{\pi})^0\) is also closed in \(\mathrm{SO}(d)\). Letting \(\mc{G}_{w_i} \coloneqq \{ R \in \mathrm{SO}(d) ~|~Rw_i=w_i \}\) for \(i=1, 2\), we know that since \(\pi\) is rotationally symmetric about both \(w_1, w_2 \in \mc{S}^{d-1}\), we have \(\mc{G}_{w_1}, \mc{G}_{w_2} \subseteq \mathscr{T}(\pi)\). Moreover, \(\mathrm{SO}(d-1)\) is a maximal connected closed subgroup of \(\mathrm{SO}(d)\) \citep{dynkin_maximal_1957, uchida_classification_1979}, and \(\mc{G}_{w_1}, \mc{G}_{w_2}\) are conjugate in \(\mathrm{SO}(d)\) to the canonical embedding of \(\mathrm{SO}(d-1)\). Since conjugation is a group automorphism and a topological homeomorphism, it preserves maximal connected closed subgroups, and so \(\mc{G}_{w_1}, \mc{G}_{w_2}\) are themselves maximal connected closed subgroups of \(\mathrm{SO}(d)\). As they are connected, and both contain the identity, it follows that \(\mc{G}_{w_1}, \mc{G}_{w_2} \le \mathscr{T}(\pi)^0\).

Assume, towards a contradiction, that \(w_1 \ne \pm w_2\). Then there exists \(
R \in \mc G_{w_2} \setminus \mc G_{w_1}\). Indeed, otherwise every rotation fixing \(w_2\) would also fix \(w_1\), so \(w_1\) would belong to the common fixed-point subspace of \(\mc G_{w_2}\), namely \(\operatorname{span}(w_2)\), which would force \(w_1=\pm w_2\). Since \(\mc G_{w_2} \le \mathscr T(\pi)^0\), it follows that \(
R \in \mathscr T(\pi)^0 \setminus \mc G_{w_1}\). 
Hence \(\mc G_{w_1}\) is a proper subgroup of \(\mathscr T(\pi)^0\). But \(\mc{G}_{w_1}\) is a maximal connected closed subgroup of \(\mathrm{SO}(d)\) and so it must be that \(\mathscr{T}(\pi)^0 = \mathrm{SO}(d)\). It follows that \(\mathscr{T}(\pi) = \mathrm{SO}(d)\). Therefore \(\pi\) is invariant under every rotation of \(\mc{S}^{d-1}\). The only rotation-invariant probability measure on \(\mc{S}^{d-1}\) is the uniform measure, a contradiction. So \(\operatorname{span}(w_1) = \operatorname{span}(w_2)\).
\end{proof}

\subsection{Explicit construction of the unique minimiser example}\label{apdx:sphere_construction}

Here, we justify the claim made at the end of Section~\ref{sec:sphere} for the reverse KL divergence objective. In particular, we show that there exists a threshold \(\eta_c(d,\lambda,\kappa_0) > 0\) such that the KL has a unique minimiser for \(0 < \eta \le \eta_c(d,\lambda,\kappa_0)\), whereas for \(\eta > \eta_c(d,\lambda,\kappa_0)\) the minimiser set is non-singleton. 

Fix \(\lambda \ne 0\), \(u \in \mc S^{d-1}\), \(\kappa_0 > 0\), and \(\eta > 0\). For \(\nu \in \mc S^{d-1}\), let \(X \sim \mb Q_{\nu,\kappa_0}\), and write \(\mathscr{L}_{\lambda,\eta,\kappa_0}(\nu)\) for the reverse KL objective \(\mc D_{\mathrm{KL}}(\mb Q_{\nu,\kappa_0} \| \mb P_{\lambda,\eta})\). Using
\begin{equation*}
\log f_{\mb Q_{\nu,\kappa_0}}(x)
=
\log c_d(\kappa_0)+\kappa_0 \nu^\top x
\end{equation*}
and
\begin{equation*}
\log f_{\mb P_{\lambda,\eta}}(x)
=
-\log Z_{\lambda,\eta}
+\lambda u^\top x
-\eta (u^\top x)^2,
\end{equation*}
we obtain
\begin{align}
\mathscr{L}_{\lambda,\eta,\kappa_0}(\nu)
&=
\mb E\!\left[
\log \frac{f_{\mb Q_{\nu,\kappa_0}}(X)}{f_{\mb P_{\lambda,\eta}}(X)}
\right] \notag\\
&=
\log c_d(\kappa_0)+\log Z_{\lambda,\eta}
+
\kappa_0 \underbrace{\mb E[\nu^\top X]}_{(E_1)}
-
\lambda \underbrace{\mb E[u^\top X]}_{(E_2)}
+
\eta \underbrace{\mb E[(u^\top X)^2]}_{(E_3)}.
\label{eq:sphere_reverse_kl_objective}
\end{align}
We begin by reducing each expectation in \eqref{eq:sphere_reverse_kl_objective} to a function of the single scalar \(c \coloneqq u^\top \nu \in [-1,1]\). Set \(T \coloneqq \nu^\top X\). With \(\rho_d(t) \coloneqq (1-t^2)^{(d-3)/2}\), the density of \(T\) is \citep[Eq.~9.3.12]{mardia_directional_2009}
\begin{equation}\label{eq:sphere_density_T}
f_T(t) \propto e^{\kappa_0 t} \rho_d(t), \qquad t \in [-1,1].
\end{equation}
Now define
\[
A_{d,\kappa_0} \coloneqq \mb E[T],
\qquad
m_{d,\kappa_0} \coloneqq \mb E[T^2].
\]
Then the mean and variance of \(X\) are given by \citep[Eqs.~9.3.33--9.3.34]{mardia_directional_2009}
\begin{equation*}
\mb E[X]= A_{d,\kappa_0}\nu, \qquad 
\operatorname{Var}(X)
=
\operatorname{Var}(T)\,\nu\nu^\top
+
\frac{1-m_{d,\kappa_0}}{d-1}(I_d-\nu\nu^\top).
\end{equation*}
Therefore, the expectations \((E_1) \) and \((E_2) \) in \eqref{eq:sphere_reverse_kl_objective} are
\begin{gather}
\mb E[\nu^\top X]=A_{d,\kappa_0}, \notag\\
\mb E[u^\top X]
=
u^\top \mb E[X]
=
A_{d,\kappa_0}\,u^\top \nu
=
A_{d,\kappa_0}\,c.
\label{eq:sphere_E1_E2}
\end{gather}
Moreover, the second moment matrix is
\begin{align*}
\mb E[XX^\top]
&=
\operatorname{Var}(X)+\mb E[X]\mb E[X]^\top \notag\\
&=
\bigl(\operatorname{Var}(T)+(\mb E[T])^2\bigr)\nu\nu^\top
+
\frac{1-\mb E[T^2]}{d-1}(I_d-\nu\nu^\top) \notag\\
&=
m_{d,\kappa_0}\,\nu\nu^\top
+
\frac{1-m_{d,\kappa_0}}{d-1}(I_d-\nu\nu^\top),
\end{align*}
and hence, expectation \( (E_3) \) in \eqref{eq:sphere_reverse_kl_objective} reduces to
\begin{align}
\mb E[(u^\top X)^2]
&=
u^\top \mb E[XX^\top]u \notag\\
&=
m_{d,\kappa_0}(u^\top \nu)^2
+
\frac{1-m_{d,\kappa_0}}{d-1}\bigl(1-(u^\top \nu)^2\bigr) \notag\\
&=
\frac{1-m_{d,\kappa_0}}{d-1}
+
\underbrace{\frac{d\,m_{d,\kappa_0}-1}{d-1}}_{B_{d,\kappa_0}}\,c^2.
\label{eq:sphere_E3}
\end{align}
Substituting \eqref{eq:sphere_E1_E2} and \eqref{eq:sphere_E3} into \eqref{eq:sphere_reverse_kl_objective} yields
\begin{equation*}
\mathscr{L}_{\lambda,\eta,\kappa_0}(\nu)
=
\eta B_{d,\kappa_0}\,c^2
-
\lambda A_{d,\kappa_0}\,c
+
C_{\lambda,\eta,\kappa_0},
\end{equation*}
where
\[
C_{\lambda,\eta,\kappa_0}
\coloneqq
\log c_d(\kappa_0)+\log Z_{\lambda,\eta}
+\kappa_0 A_{d,\kappa_0}
+\eta \frac{1-m_{d,\kappa_0}}{d-1},
\]
is a constant independent of \(\nu\). We have thus reduced the optimisation problem to the scalar quadratic
\begin{equation}\label{eq:sphere_scalar_quadratic}
\widehat{\mathscr{L}}_{\lambda,\eta,\kappa_0}(c)
\coloneqq
\eta B_{d,\kappa_0}c^2-\lambda A_{d,\kappa_0}c,
\qquad c\in[-1,1].
\end{equation}

Since \(\eta > 0\), once \(B_{d,\kappa_0} > 0\) is established, the quadratic in \eqref{eq:sphere_scalar_quadratic} is strictly convex and therefore has a unique unconstrained minimiser. Because \(d \ge 3\), the denominator of \(B_{d,\kappa_0}\) is positive, so it suffices to show that
\[
d\,m_{d,\kappa_0}-1 > 0,
\]
that is, \(m_{d,\kappa_0} > 1/d\). To this end, let \(Y \sim \sigma\) be uniform on \(\mc S^{d-1}\) and define \(T_0 \coloneqq u^\top Y\). Then the density of \(T_0\) is \citep[Eq.~9.3.1]{mardia_directional_2009}
\begin{equation}\label{eq:sphere_density_T0}
f_{T_0}(t) \propto \rho_d(t), \qquad t \in [-1,1].
\end{equation}
We use \eqref{eq:sphere_density_T0} to lower bound \(m_{d,\kappa_0}\). Expanding the definition, observe that \(t^2 \rho_d(t)\) is even while \(e^{\kappa_0 t} = \cosh(\kappa_0 t) + \sinh(\kappa_0 t)\), so the odd terms integrate to \(0\) on \([-1,1]\), and we obtain
\begin{equation}
m_{d,\kappa_0}
=
\frac{\displaystyle \int_{-1}^1 t^2 e^{\kappa_0 t}\rho_d(t)\,dt}
{\displaystyle \int_{-1}^1 e^{\kappa_0 t}\rho_d(t)\,dt} =
\frac{\displaystyle \int_0^1 t^2 \cosh(\kappa_0 t)\rho_d(t)\,dt}
{\displaystyle \int_0^1 \cosh(\kappa_0 t)\rho_d(t)\,dt} 
>
\frac{\displaystyle \int_0^1 t^2 \rho_d(t)\,dt}
{\displaystyle \int_0^1 \rho_d(t)\,dt}
=
\mb E[T_0^2],
\label{eq:sphere_m_lower_bound}
\end{equation}
where the strict inequality follows from Chebyshev's integral inequality \citep[Thm.~10, Ch.~2.5]{mitrinovic_analytic_1970}, since \(t \mapsto t^2\) and \(t \mapsto \cosh(\kappa_0 t)\) are strictly increasing on \([0,1]\). To compute \(\mb E[T_0^2]\), note that since \(Y \sim \sigma\) is uniform on \(\mc S^{d-1}\), its law is rotation invariant and therefore \(\mb E[YY^\top] = \alpha I_d\) for some \(\alpha \in \mb R\). Taking traces,
\[
\tr(\mb E[YY^\top])
=
\mb E[\tr(YY^\top)]
=
\mb E[\|Y\|^2]
=
1,
\qquad
\tr(\alpha I_d) = \alpha d,
\]
and hence \(\alpha = 1/d\). The second moment of \(T_0\) is thus
\[
\mb E[T_0^2]
=
u^\top \mb E[YY^\top]u
=
u^\top \left(\frac{1}{d} I_d\right)u
=
\frac{1}{d}\|u\|^2
=
\frac{1}{d}.
\]
Combining this with \eqref{eq:sphere_m_lower_bound} gives \(m_{d,\kappa_0} > 1/d\), and therefore \(B_{d,\kappa_0} > 0\). Consequently, the scalar objective in \eqref{eq:sphere_scalar_quadratic} has the unique unconstrained minimiser
\begin{equation*}
c^\star = \frac{\lambda A_{d,\kappa_0}}{2\eta B_{d,\kappa_0}}.
\end{equation*}
Define
\begin{equation*}
\eta_c(d,\lambda,\kappa_0)
\coloneqq
\frac{|\lambda| A_{d,\kappa_0}}{2 B_{d,\kappa_0}}.
\end{equation*}
Then, \( \eta_c(d, \lambda, \kappa_0) > 0 \) since \( |\lambda|, B_{d, \kappa_0} > 0\) and moreover \(A_{d, \kappa_0} > 0\). Indeed, since \( \rho_d(t) \) is even,
\[
A_{d, \kappa_0} = \frac{\displaystyle \int_{-1}^{1}t e^{\kappa_0t} \rho_d(t) \, dt}{\displaystyle \int_{-1}^{1} e^{\kappa_0 t} \rho_d(t) \, dt} = \frac{\displaystyle \int_{0}^{1} t \left( e^{\kappa_0 t} - e^{- \kappa_0 t} \right) \rho_d(t) \, dt}{\displaystyle \int_{-1}^{1} e^{\kappa_0 t } \rho_d(t)\, dt} > 0.
\]
If \(0 < \eta \le \eta_c(d,\lambda,\kappa_0)\), then \(|c^\star| \ge 1\), and so the unique constrained minimiser of \eqref{eq:sphere_scalar_quadratic} over \([-1,1]\) is \(c = \operatorname{sgn}(\lambda)\). Since \(u^\top \nu = \operatorname{sgn}(\lambda)\) if and only if \(\nu = \operatorname{sgn}(\lambda)u\), the original objective \(\mathscr{L}_{\lambda,\eta,\kappa_0}(\nu)\) has the unique minimiser \(\nu^\star = \operatorname{sgn}(\lambda)u\). Consequently, exact statistic recovery holds with
\[
\mc A(\mb Q_{\nu^\star,\kappa_0}) = \operatorname{span}(u) = \mc A(\mb P_{\lambda,\eta}).
\]
If \(\eta > \eta_c(d,\lambda,\kappa_0)\), then \(c^\star \in (-1,1)\). Hence
\begin{equation}\label{eq:sphere_minimiser_set}
\argmin_{\nu \in \mc S^{d-1}} \mathscr{L}_{\lambda,\eta,\kappa_0}(\nu)
=
\{\nu \in \mc S^{d-1} \mid u^\top \nu = c^\star\}.
\end{equation}
Since \(d \ge 3\) and \(c^\star \in (-1,1)\), the set in \eqref{eq:sphere_minimiser_set} is a \((d-2)\)-sphere and hence is not a singleton. Moreover, no minimiser \(\nu\) satisfies \(\mc A(\mb Q_{\nu,\kappa_0}) = \mc A(\mb P_{\lambda,\eta})\), because \(\mc A(\mb Q_{\nu,\kappa_0}) = \operatorname{span}(\nu)\), while \(\operatorname{span}(\nu) = \operatorname{span}(u)\) would force \(\nu = \pm u\) and hence \(u^\top \nu \in \{\pm 1\}\), contradicting \(c^\star \in (-1,1)\).

To visualise the axis recovery, we work with \(d=3\). For fixed \(\lambda > 0\) and \(\kappa_0 > 0\), we can explicitly compute the constants \(A_{3, \kappa_0}\) and \(B_{3, \kappa_0}\) needed to evaluate \(\eta_c(3, \lambda, \kappa_0)\). First observe that since \(\rho_3(t) = 1\) for all \(t \in [-1, 1]\), the marginal density of \(T = \nu^TX\) in dimension 3 is proportional to \(e^{\kappa_0 t}\). Writing 
\[
Z(\kappa_0) \coloneqq \int_{-1}^{1} e^{\kappa_0 t} \, dt = \frac{2 \sinh \kappa_0}{\kappa_0}, 
\]
the first two moments of \(T\) can be obtained by differentiating the log-partition function:
\[
A_{3, \kappa_0} = \mb{E}[T] = \frac{d}{d \kappa_0} \log Z(\kappa_0), \qquad m_{3, \kappa_0} = \mb{E}[T^2] = \frac{d^2}{d \kappa_0^2} \log Z(\kappa_0) + \left( \frac{d}{d\kappa_0} \log Z(\kappa_0)\right)^2. 
\]
Since \(\log Z(\kappa_0) = \log 2 + \log \sinh \kappa_0 - \log \kappa_0\), this yields 
\[
A_{3, \kappa_0} = \coth \kappa_0 - \frac{1}{\kappa_0}, \qquad m_{3, \kappa_0} = \mb{E}[T^2] = 1 - \frac{2}{\kappa_0} \coth \kappa_0 + \frac{2}{\kappa_0^2}. 
\]
Substituting \(m_{3, \kappa_0}\) into the definition of \(B_{3, \kappa_0}\) gives 
\[
B_{3, \kappa_0} = 1 - \frac{3}{\kappa_0} \coth \kappa_0 + \frac{3}{\kappa_0^2}
\]
and the critical threshold therefore becomes 
\[
\eta_c(3, \lambda, \kappa_0) = \frac{\lambda \kappa_0 (\kappa_0 \coth \kappa_0 - 1)}{2(\kappa_0^2-3\kappa_0 \coth \kappa_0 + 3)}. 
\]
In the example of Figure~\ref{fig:sphere_vi}, we take \(\lambda = 1\) and \(\kappa_0 = 2.5\), so 
\begin{gather*}
A_{3, 2.5} \approx 0.6135, \qquad B_{3, 2.5} \approx 0.2637, \\ 
\eta_c(3, 1, 2.5) \approx 1.1632.
\end{gather*}
In the left panel, we use \(\eta = 1 < \eta_c(3, 1, 2.5)\), and the unique minimiser \(\nu^\star\) of \(\mathscr{L}_{\lambda, \eta, \kappa_0}(\nu)\) satisfies 
\[
\mc{A}(\mb{Q}_{\nu^\star, 2.5}) = \mc{A}(\mb{P}_{1, 1}) = \operatorname{span}(u), 
\]
while in the right panel, we use \(\eta = 2 > \eta_c(3, 1, 2.5)\), so the minimiser set is the latitude circle 
\[
\left\{
\nu\in \mc{S}^2
:
u^\top \nu
=
\frac{A_{3,2.5}}{4B_{3,2.5}}
\approx 0.5816
\right\},
\]
and \(\mc{A}(\mb{Q}_{\nu, 2.5}) \ne \mc{A}(\mb{P}_{1, 2})\) for all minimisers \(\nu\).

\end{document}